%% file: main.tex
\documentclass{article}

\usepackage{microtype}
\usepackage{graphicx}
\usepackage{subfigure}
\usepackage{booktabs} 

\usepackage{hyperref}

\usepackage[accepted]{icml2026}

\usepackage{amsmath}
\usepackage{amssymb}
\usepackage{mathtools}
\usepackage{amsthm}
\input{math_commands}

\usepackage[capitalize,noabbrev]{cleveref}

\usepackage{setspace}

\theoremstyle{plain}
\newtheorem{theorem}{Theorem}[section]

\theoremstyle{definition}

\newtheorem{assumption}[theorem]{Assumption}
\theoremstyle{remark}

\usepackage[textsize=tiny]{todonotes}

\usepackage{multirow}
\usepackage{makecell}
\usepackage{enumitem}
\usepackage{makecell}
\usepackage{colortbl}
\usepackage{wrapfig}    

\newcommand{\ul}{\underline}

\icmltitlerunning{DualOptim+}

\begin{document}

\twocolumn[
\icmltitle{DualOptim+: Bridging Shared and Decoupled Optimizer States for \\Better Machine Unlearning in Large Language Models}




\begin{icmlauthorlist}
\icmlauthor{Xuyang Zhong}{cityu}
\icmlauthor{Qizhang Li}{ind}
\icmlauthor{Yiwen Guo}{ind}
\icmlauthor{Chen Liu}{cityu}
\end{icmlauthorlist}

\icmlaffiliation{cityu}{Department of Computer Science, City University of Hong Kong}
\icmlaffiliation{ind}{Independent Researcher}

\icmlcorrespondingauthor{Chen Liu}{chen.liu@cityu.edu.hk}
\icmlcorrespondingauthor{Yiwen Guo}{guoyiwen89@gmail.com}

\icmlkeywords{Machine Learning, ICML}

\vskip 0.3in
]



\printAffiliationsAndNotice{}  

\begin{abstract}
We propose \textbf{DualOptim+}, a novel optimization framework for improving machine unlearning in large language models.
It introduces a base state to capture common representations shared by forgetting and retaining objectives and delta states to preserve objective-specific residuals. This architecture allows the optimizer to adaptively bridge shared and decoupled states based on the directional conflict between forgetting and retaining gradients. We further introduce DualOptim+ 8bit, a quantized variant that reduces memory overhead without compromising performance. Extensive experiments across fictitious and real-world unlearning, safety alignment, and multi-task learning tasks demonstrate that DualOptim+ consistently achieves a superior trade-off between different objectives. Codes are available at \href{https://github.com/CityU-MLO/DualOptimPlus}{https://github.com/CityU-MLO/DualOptimPlus}.
\end{abstract}

\input{Section/intro}

\input{Section/related}
\input{Section/method}
\input{Section/experiments}
\input{Section/conclusion}


\bibliography{main}
\bibliographystyle{icml2026}

\newpage
\appendix
\onecolumn
\input{Appendix/alg}

\input{Appendix/discussion}
\input{Appendix/experiment}
\input{Appendix/imple}

\end{document}

%% file: math_commands.tex
\usepackage{amsmath,amsfonts,bm}

\def\eqref#1{equation~\ref{#1}}

\def\1{\bm{1}}

\def\vtheta{{\bm{\theta}}}

\def\vg{{\bm{g}}}

\def\vm{{\bm{m}}}

\def\vo{{\bm{o}}}

\def\vv{{\bm{v}}}

\DeclareMathAlphabet{\mathsfit}{\encodingdefault}{\sfdefault}{m}{sl}
\SetMathAlphabet{\mathsfit}{bold}{\encodingdefault}{\sfdefault}{bx}{n}

\def\gD{{\mathcal{D}}}

\def\gL{{\mathcal{L}}}

\def\gP{{\mathcal{P}}}

\newcommand{\R}{\mathbb{R}}



%% file: Section/intro.tex
\section{Introduction}
Machine unlearning (MU) \citep{bourtoule2021machine} aims to erase the influence of specific training data, known as the forget set, from pretrained models while preserving their general utility on the retain set. While MU has been extensively studied in computer vision tasks, such as image classification and generation \citep{heng2023selective, kurmanji2023towards, fan2024salun, huang2025unified}, the rapid rise of Large Language Models (LLMs) has underscored the need for efficient methods to remove outdated or unauthorized information \citep{dang2021right}. In this regard, specialized unlearning techniques for LLMs have recently emerged as a significant area of research \citep{yao2024large, zhang2024negative, yuan2024closer}.

Despite recent progress, it is still challenging to balance the erasure of specific information and the preservation of general capability by optimizing various designed unlearning objectives. Most existing LLM unlearning methods \citep{yao2024large, zhang2024negative, yuan2024closer} minimize the objectives of the forget set and the retain set jointly by the sum of their gradients, which often leads to a significant degradation in model utility. Inspired by \citet{fan2024salun, huang2025unified}, alternately optimizing forgetting and retaining objectives shows promising unlearning performance, but it suffers from gradient entanglement for two conflicting objectives when using a shared optimizer state (e.g., the moving average of gradients and squared gradients in Adam). To address this issue, DualOptim \citep{zhong2025dualoptim} decouples optimizer states and uses separate optimizers for different objectives. Despite the effectiveness in computer vision tasks, it yields marginal improvements for LLMs. Therefore, developing a novel updating scheme is essential for effective LLM unlearning.

In this work, we propose \textbf{DualOptim+}, a plug-and-play framework compatible with any optimizer with stored states. It introduces a shared \textbf{base state} alongside decoupled \textbf{delta states} for each optimization objective. Specifically, the base state is updated using gradients from both the forgetting and retaining objectives, enabling it to capture their common representations. In parallel, the delta states are updated by the residual between the objective-specific gradients and the base state, thereby preserving distinct representations unique to each objective. 
Finally, the parameters are updated by combining the base state and the delta state.

Our theoretical and numerical analyses demonstrate that DualOptim+ functions as an adaptive intermediate between fully shared and decoupled states, adjusting its behavior based on the degree of directional conflict between the forgetting and retaining gradients. We validate the effectiveness of DualOptim+ through extensive experiments across diverse machine unlearning tasks on various LLMs, including fictitious datasets, real-world scenarios, and safety alignment tasks. To mitigate the memory overhead associated with additional optimizer states, we also introduce \textbf{DualOptim+ 8bit}. This quantized variant significantly reduces memory consumption, while maintaining the peak performance. Our results indicate that DualOptim+ bridges the gap between decoupled and shared optimizer states to achieve a superior trade-off between forgetting efficacy and model utility. 
We believe, our method is a generalizable optimization framework, which can be applied in broader scenarios beyond machine unlearning,  such as LLM alignment, multi-objective learning, e.t.c.

We summarize the contributions of this paper as follows:
\begin{enumerate}
    \item We propose \textbf{DualOptim+}, which introduces a shared base state to capture common representations and decoupled delta states to preserve task-specific residuals, effectively bridging the gap between shared and decoupled optimizer states.
    \item DualOptim+ is a plug-and-play framework applicable to any multi-objective optimization and optimizers with stored states. Theoretical and numerical analysis demonstrate that DualOptim+ functions as an intermediate between fully shared and decoupled states.
    \item Extensive experiments across various LLMs, datasets and tasks confirm that our method achieves a superior trade-off between forgetting efficacy and model utility. DualOptim+ 8bit reduces memory overhead by quantization without compromising performance.
\end{enumerate}

%% file: Section/related.tex
\section{Related Work}
Early efforts of machine unlearning (MU) are devoted to tasks such as image classification and image generation~\citep{bourtoule2021machine, heng2023selective, jia2023model, kurmanji2023towards, tarun2023fast, fan2024salun, huang2025unified}. It has recently been adapted to address the unique challenges of Large Language Models (LLMs), such as removing sensitive or copyrighted training data \citep{mainitofu}, mitigating harmful behaviors \citep{li2024the}, and efficiently handling the high-dimensional parameter space \citep{fan2024simplicity}. 
Based on how the model handles forgotten knowledge, MU methods on LLMs can be categorized into \textit{untargeted} and \textit{targeted} unlearning.

In \textbf{untargeted unlearning}, the objective is to eliminate the influence of specific data without constraining the model's subsequent response to the forgotten content. Common techniques for this paradigm include gradient ascent (GA) \citep{Thudi2021OnTN, yao2024large}, negative preference optimization (NPO) \citep{fan2024simplicity, zhang2024negative}, maximum entropy (ME) \citep{yuan2024closer}, and representation misalignment \citep{li2024the, zou2024improving}. Conversely, \textbf{targeted unlearning} aims to induce specific model behaviors when encountering forgotten information, such as providing standardized rejection responses (e.g., ``I don’t know''). It is more user-friendly than untargeted unlearning. Popular methods for targeted unlearning include rejection fine-tuning (IDK) \citep{mainitofu}, direct preference optimization (DPO) \citep{rafailov2023direct}, and self-classification \citep{gandikota2024elm}.

Besides erasing targeted information, it is also crucial for both untargeted and targeted unlearning to maintain general model utility and avoid catastrophic forgetting. Most existing approaches incorporate cross-entropy loss \citep{gandikota2024elm, li2024the, yao2024large, zhang2024negative} or divergence-driven loss \citep{yuan2024closer} to optimize the model utility on a designated retain set.

Most LLM unlearning methods jointly update the objectives for the forget and the retain sets, but this optimization strategy usually leads to excessive forgetting and utility degradation \citep{zhang2024negative}.
This issue is mitigated by alternatively using gradients from the forgetting and the retaining objectives \citep{kurmanji2023towards, fan2024salun, huang2025unified}.
DualOptim \citep{zhong2025dualoptim} further improves the effectiveness and stability of unlearning by using two distinct optimizers with separate states.

In addition to the aforementioned optimization-based methods \citep{Jeung2025DUSKDN,jeung-etal-2025-seps,reisizadeh-etal-2026-blur}, inference-time adjustment methods aim to achieve efficient unlearning without modifying model parameters. Theses methods are mainly based on output intervention \citep{deng2025guard, wang2026dragon} and in-context learning \citep{pawelczyk2024incontext}.

%% file: Section/method.tex
\section{Method}
\subsection{Preliminaries}
\begin{figure*}[!t]
    \centering
    \subfigure[Joint]{
    \includegraphics[width=0.125\textwidth]{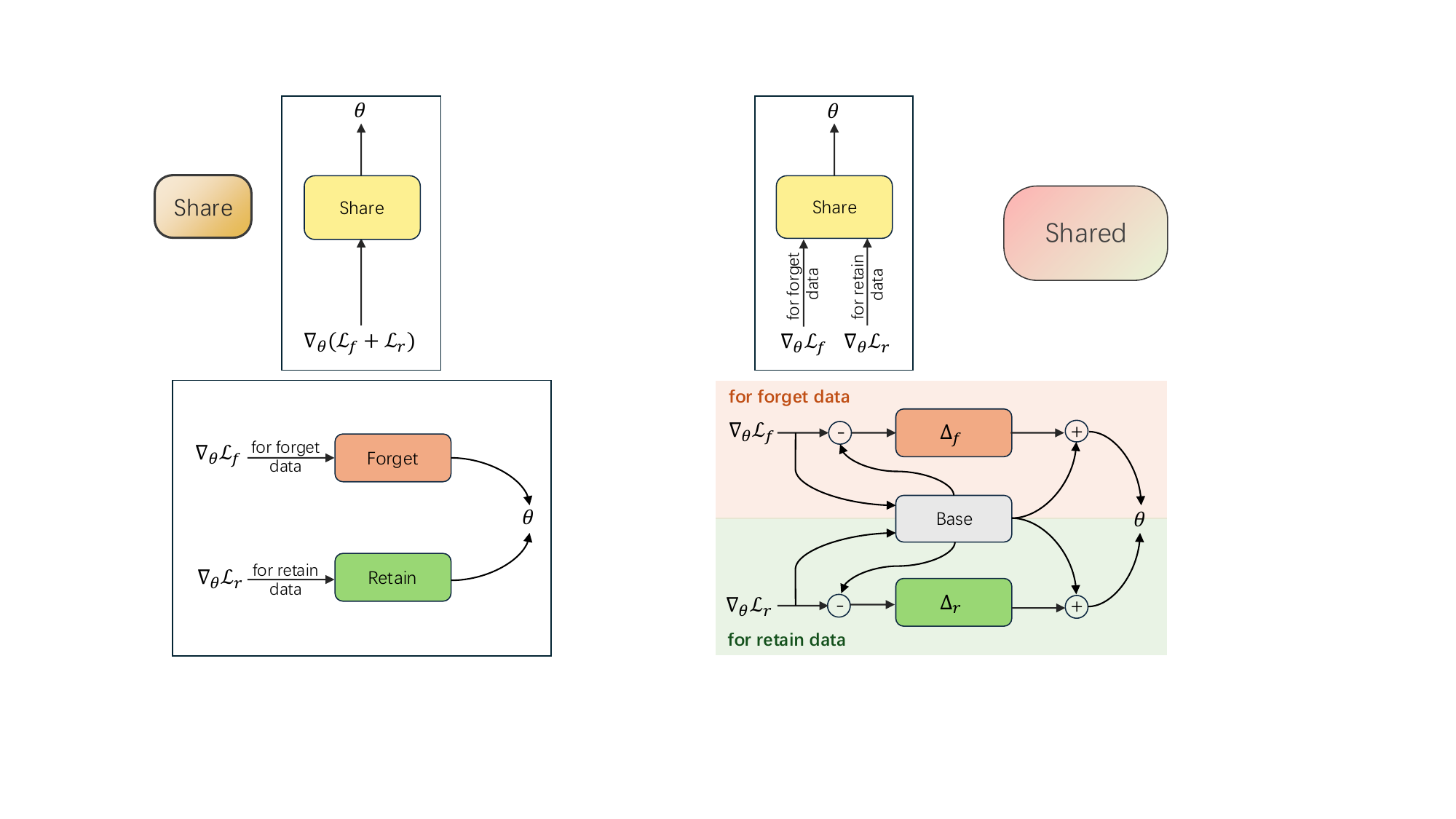}
    }
    \subfigure[Alternate]{
    \includegraphics[width=0.125\textwidth]{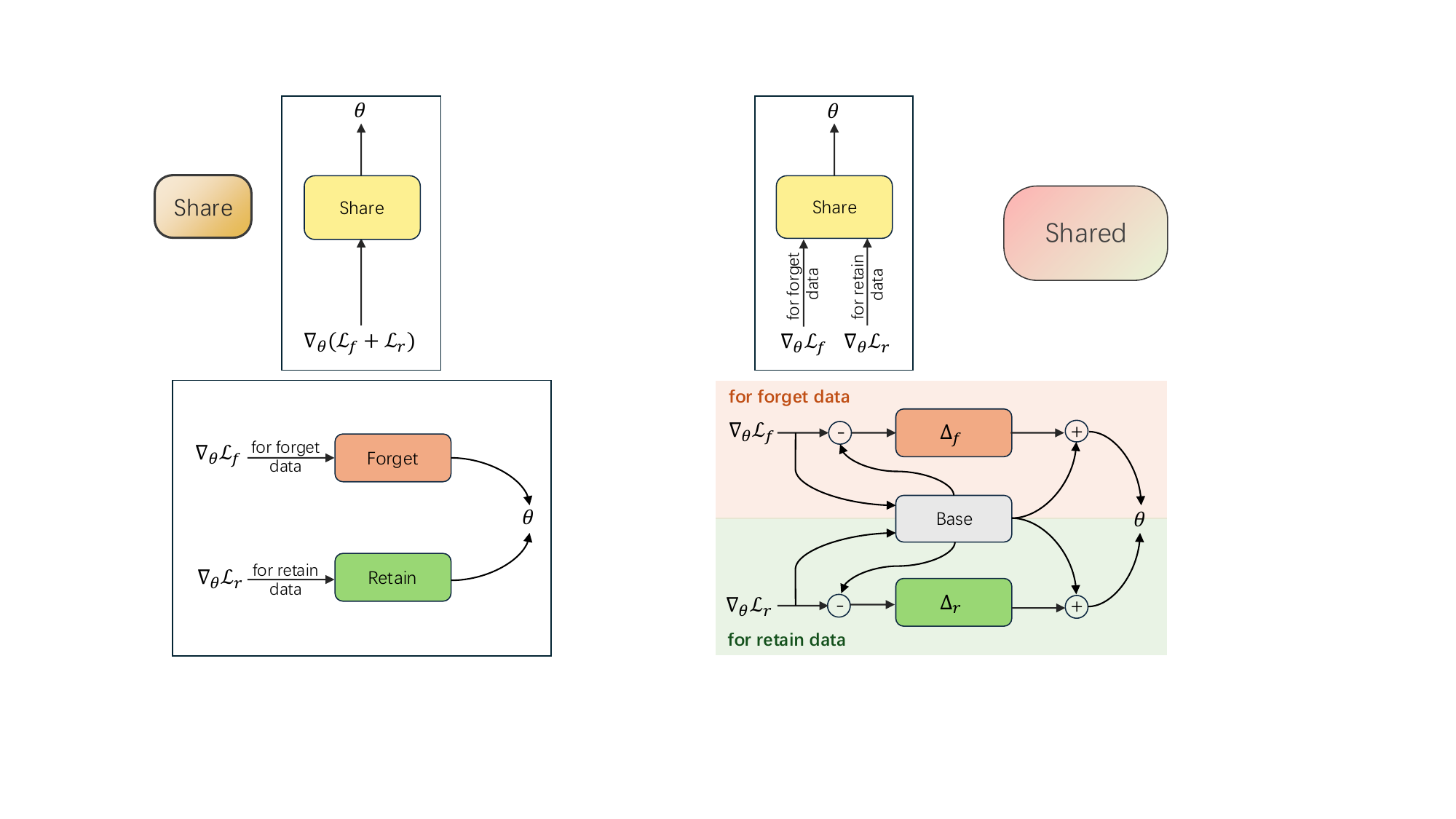}
    }
    \subfigure[DualOptim]{
    \includegraphics[width=0.295\textwidth]{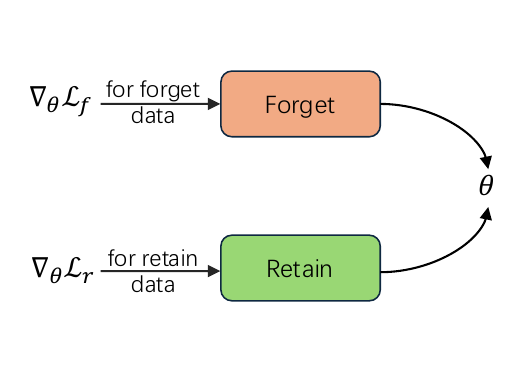}
    }
    \subfigure[\textbf{DualOptim+}]{
    \includegraphics[width=0.365\textwidth]{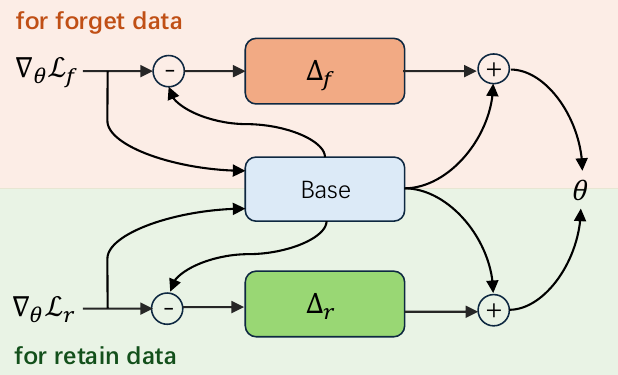}
    } 
    \vspace{-0.5em}
    \caption{\textbf{Comparison of baselines and DualOptim+}. The block represents the optimizer state. \textbf{(a) Joint} updating scheme sums both the forgetting and retaining losses and executes a single back-propagation step. \textbf{(b) Alternate} updating scheme minimizes (\ref{eq:problem}) by gradients from either $\gL_f$ or $\gL_r$ at each iteration, and alternates between the two objectives. \textbf{(c) DualOptim} introduces an independent optimizer state for each objective based on Alternate. \textbf{(d)} \textbf{DualOptim+} bridges the shared and decoupled states by introducing base and delta states. The base state is jointly updated by $\nabla_\theta \gL_f$ and $\nabla_\theta \gL_r$, and $\Delta_f$ / $\Delta_r$ is updated by the difference between $\nabla_\theta \gL_f$ / $\nabla_\theta \gL_r$ and base state. Ultimately, the momentum term used for the parameter update is reconstructed by combining the base state with the respective delta state.} 
    \label{fig:baseline_compare}
    \vspace{-0.5em}
\end{figure*}

Machine unlearning (MU) seeks to solve the following optimization problem:
\begin{equation} \label{eq:problem}
    \min_{\bm{\theta}}\gL_f(\bm{\theta}, \gD_f) + \gL_r(\bm{\theta}, \gD_r),
\end{equation}
where $\bm{\theta}$ represents the model parameter. The forget and retain sets are denoted by $\gD_f$ and $\gD_r$, respectively. $\gL_f$ and $\gL_r$ are the corresponding loss functions for forgetting and retaining objectives, respectively. These objectives guide the model to eliminate the information contained in $\gD_f$ while simultaneously preserving the utility on $\gD_r$.

In the context of large language models (LLMs), most MU methods \citep{zhang2024negative,yuan2024closer} employ the \textbf{Joint} updating scheme (see Figure \ref{fig:baseline_compare} (a)). This scheme sums both the forgetting and retaining objectives and obtains the gradient by a single back-propagation step to minimize (\ref{eq:problem}). The updating step can be formulated as follows. $\gP$ represents an optimizer that may store states.
\begin{equation}
    \vtheta \leftarrow \vtheta - \gP(\nabla_{\vtheta}(\gL_f + \gL_r)).
\end{equation}
Some recent works \citep{fan2024salun,huang2025unified} utilize the \textbf{Alternate} updating scheme (see Figure \ref{fig:baseline_compare} (b)) to improve the performance. This scheme minimizes (\ref{eq:problem}) by gradients from either $\gL_f$ or $\gL_r$ at each iteration, and alternates between the two objectives. Formally,
\begin{equation} \label{eq:update_alter}
\begin{cases}
    \vtheta \leftarrow \vtheta - \gP(\nabla_{\vtheta}\gL_f) & \text{for forget data}\\
    \vtheta \leftarrow \vtheta - \gP(\nabla_{\vtheta}\gL_r) & \text{for retain data}
\end{cases}.
\end{equation}
Despite its effectiveness, the alternating updating scheme makes MU approaches unstable and sensitive to hyper-parameter tuning.
\textbf{DualOptim} \citep{zhong2025dualoptim} (see Figure~\ref{fig:baseline_compare} (c)) mitigates the unstable issue and further improves the effectiveness by using two decoupled optimizers $\gP_f$, $\gP_r$ with separate states.
This updating scheme disentangles the conflicting gradients from two objectives. Formally,
\begin{equation} \label{eq:update_dualoptim}
\begin{cases}
    \vtheta \leftarrow \vtheta - \gP_f(\nabla_{\vtheta}\gL_f) & \text{for forget data}\\
    \vtheta \leftarrow \vtheta - \gP_r(\nabla_{\vtheta}\gL_r) & \text{for retain data}
\end{cases}.
\end{equation}
DualOptim is a plug-and-play technique applicable for various unlearning tasks.
However, compared with remarkable performance improvement in image classification and generation, applying DualOptim in LLMs yields relatively marginal improvement.
Our observations in Figure~\ref{fig:numerical_results} (a) show that the similarity between the decoupled momentum terms of DualOptim is near zero, implying that the gradients from the forgetting and retaining objectives in LLM unlearning are not always conflicting, especially in the later phase.
This phenomenon indicates that we should adaptively utilize both shared features and distinct features from the gradients.


\subsection{Bridging Shared and Decoupled Optimizer States}
In this subsection, we propose \textbf{DualOptim+} to bridge the decoupled and shared optimizer states to improve the unlearning performance for LLMs (see Figure~\ref{fig:baseline_compare} (d)).
Specifically, DualOptim+ decomposes each optimizer state into two components: a shared \textbf{base state} and decoupled \textbf{delta states} for forgetting and retaining objectives. Without loss of generality, we use the first-order momentum term as an example in the analyses below. The updating rule can be straightforwardly extended to other optimizer states.

\textbf{Base State.} The base state $B$ is introduced to reserve the common representation shared by the forgetting and retaining objectives. It is updated jointly by $\nabla_\theta \gL_f$ and $\nabla_\theta \gL_r$, effectively acting as a shared state. Formally,
\begin{equation} \label{eq:base_update}
\begin{cases}
    B \leftarrow \beta B + (1-\beta) \nabla_{\theta} \gL_f & \text{for forget data} \\
    B \leftarrow \beta B + (1-\beta) \nabla_{\theta} \gL_r & \text{for retain data}
\end{cases}.
\end{equation}
where $\beta\in[0,1)$ is the momentum factor. 

\textbf{Delta State.} We introduce the delta states $\Delta_f$, $\Delta_r$ to capture the historical residual information: the difference between the individual gradient and the base state. This allows the delta states to reserve the distinct representations specific to the forgetting and retaining objectives. Formally,
\begin{equation} \label{eq:delta_update}
\begin{cases}
    \Delta_f \leftarrow \beta \Delta_f + (1-\beta) (\nabla_{\theta} \gL_f -\widehat{B}) & \text{for forget data} \\
    \Delta_r \leftarrow \beta \Delta_r + (1-\beta) (\nabla_{\theta} \gL_r -\widehat{B}) & \text{for retain data}
\end{cases}.
\end{equation}
where $\widehat{B} = B / (1-\beta^t)$ denotes the bias-corrected base state and $t$ is the iteration number.

\begin{algorithm}[!t]
   \caption{DualOptim+ with AdamW}
   \label{alg:do+_adamw}
   \setstretch{1.15}
\begin{algorithmic}[1]
    \STATE {\bfseries Input:} parameter $\vtheta$, learning rate $\eta$, betas ($\beta_1$, $\beta_2$), epsilon $\epsilon$, weight decay factor $\lambda$, forget objective $\gL_f$, retain objective $\gL_r$, total steps $N$, forget frequency $F_f$, retain frequency $F_r$
    \STATE {\bfseries Initialize:} $\vm_{\Delta_f}\leftarrow 0$, $\vm_{\Delta_r}\leftarrow 0$, $\vm_B\leftarrow 0$, $\vv_{\Delta_f}\leftarrow 0$, $\vv_{\Delta_r}\leftarrow 0$, $\vv_B\leftarrow 0$, $t_f\leftarrow 0$, $t_r\leftarrow 0$
    \STATE $\widehat{\vm}_B,~\widehat{\vv}_B \leftarrow \vm_B,~\vv_B$
    \FOR{$t=1$ \textbf{to} $N$}
        \IF{$t\bmod (F_f + F_r) \leq F_f$} 
            \STATE $t_f \leftarrow t_f + 1$
            \STATE $\vg,~\vm_{\Delta},~\vv_{\Delta},~t' \leftarrow \nabla_{\vtheta}{\gL_f}(\vtheta),~\vm_{\Delta_f},~\vv_{\Delta_f}, ~t_f$
        \ELSE
            \STATE $t_r \leftarrow t_r + 1$
            \STATE $\vg ,~\vm_{\Delta},~\vv_{\Delta},~t' \leftarrow \nabla_{\vtheta}{\gL_r}(\vtheta),~\vm_{\Delta_r},~\vv_{\Delta_r},~t_r$
        \ENDIF
        \STATE $\vtheta \leftarrow \vtheta - \eta\lambda \vtheta$
        \STATE $\vm_{\Delta} \leftarrow \beta_1\vm_{\Delta} + (1-\beta_1) (\vg - \widehat{\vm}_B)$
        \STATE $\vv_{\Delta} \leftarrow \beta_2\vv_{\Delta} + (1-\beta_2) (\vg^2 - \widehat{\vv}_B)$
        \STATE $\widehat{\vm}_{\Delta},~\widehat{\vv}_{\Delta} \leftarrow \vm_{\Delta} / (1-\beta_1^{t'}),~\vv_{\Delta} / (1-\beta_2^{t'})$
        \STATE $\vtheta \leftarrow \vtheta - \eta (\widehat{\vm}_B + \widehat{\vm}_{\Delta}) / (\sqrt{|\widehat{\vv}_B + \widehat{\vv}_{\Delta}|} + \epsilon)$
        \STATE $\vm_B \leftarrow \beta_1\vm_B + (1-\beta_1) \vg$
        \STATE $\vv_B \leftarrow \beta_2\vv_B + (1-\beta_2) \vg^2$
        \STATE $\widehat{\vm}_B,~ \widehat{\vv}_B \leftarrow \vm_B / (1-\beta_1^{t}),~ \vv_B / (1-\beta_2^{t})$
    \ENDFOR
    \STATE {\bfseries Output:} parameter $\vtheta$
\end{algorithmic}
\end{algorithm}

\textbf{Parameter Update.} Finally, the momentum term used for the parameter update is reconstructed by combining the bias-corrected base state ($\widehat{B}$) with the respective bias-corrected delta state ($\widehat{\Delta}_f$ or $\widehat{\Delta}_r$), i.e., $\widehat{B} + \widehat{\Delta}_f$ or $\widehat{B} + \widehat{\Delta}_r$. For Adam and its variants, the second-order momentum is derived in a similar manner using a separate set of base and delta states updated by the squared gradients $(\nabla_\theta \gL_f)^2$ or $(\nabla_\theta \gL_r)^2$. The full pseudo-code of DualOptim+ integrated with AdamW is presented in Algorithm \ref{alg:do+_adamw} where we alternatively utilise $\nabla_\theta \gL_f$ for $F_f$ iterations and $\nabla_\theta \gL_r$ for $F_r$ iterations. It should be noted that the base state is updated after the parameter update to maintain a stable reference for the delta states, thereby enhancing optimization stability.

It should be emphasized that DualOptim+ is a generic framework that can be integrated into any optimizer with stored states. As an additional example, the pseudo-code for DualOptim+ integrated into Muon \cite{jordan2024muon} is provided in Algorithm \ref{alg:do+_muon} of Appendix \ref{sec:app_do+_muon}.

In addition, our method share the similar idea with some federated learning methods \citep{pmlr-v119-karimireddy20a, karimireddy2021mime, fedcm, local_adam}, i.e., the base + delta decomposition. However, our method focuses on unlearning task, which is distinct from federated learning in terms of objectives. We defer the detailed discussion and comparison in Appendix \ref{sec:compare_fl}.

\subsection{Analyses on DualOptim+} \label{sec:further_discussion}
The design of DualOptim+ yields a hypothesis regarding its behavior: DualOptim+ acts as \textbf{an intermediate between fully shared state in (\ref{eq:update_alter}) and fully decoupled states in (\ref{eq:update_dualoptim})}, adapting based on the degree of directional conflict between $\nabla_\theta \gL_f$ and $\nabla_\theta \gL_r$.

\textbf{Theoretical Analysis.}  
Without the loss of generality, we focus on one coordinate for notation simplicity since DualOptim+ updates the parameters in an elementwise manner.
We consider the assumption below before detailed analyses.
\begin{assumption} \label{assum:input_dynamic}
    \textbf{(Gradient Dynamics).} Let $\{g_{f,t} \in \R\}_{t}$, $\{g_{r, t} \in \R\}_{t}$ be two sequences representing $\nabla_\theta \gL_f$, $\nabla_\theta \gL_r$ at the time step $t$, respectively. 
    We assume the expectations of $g_{f, t}$ and $g_{r, t}$ over time $\mathbb{E}_t \left[g_{f, t}\right] = m G$, $\mathbb{E}_t \left[g_{r, t}\right] = n G$ exist, where $m, n \in [-1, 1]$, and $G$ is a non-negative constant, denoting the largest possible gradient magnitude.
\end{assumption}

Since the update rules in (\ref{eq:base_update}) and (\ref{eq:delta_update}) involve gradients from two distinct distributions, standard convergence properties for stationary inputs do not directly apply.
Nevertheless, the following theorem validates the convergence of both base and delta states in (\ref{eq:base_update}) and (\ref{eq:delta_update}).

\begin{theorem} \label{theorem}
\textbf{(Convergence of Base and Delta States)}. Considering Assumption~\ref{assum:input_dynamic}, the update rules (\ref{eq:base_update}) and (\ref{eq:delta_update}), we use $B_t$, $\Delta_{f, t}$, $\Delta_{r, t}$ to represent the value of state $B$, $\Delta_f$, $\Delta_r$ at the time step $t$, respectively. We have the following asymptotical expectation:
\begin{equation}
\small
\begin{aligned}
\lim_{T \to \infty} B_{(F_f + F_r)T} &= \frac{\beta^{F_r}(1 - \beta^{F_f})m + (1 - \beta^{F_r})n}{1 - \beta^{F_f + F_r}} G, \\
\lim_{T \to \infty} \Delta_{f,(F_f + F_r)T} &= \frac{F_f\beta^{F_f-1}(1-\beta)\left(1-\beta^{F_r}\right)}{\left(1-\beta^{F_f}\right)\left(1-\beta^{F_f+F_r}\right)} (m-n)G, \\
\lim_{T \to \infty} \Delta_{r,(F_f + F_r)T} &= \frac{F_r\beta^{F_r-1}(1-\beta)\left(1-\beta^{F_f}\right)}{\left(1-\beta^{F_r}\right)\left(1-\beta^{F_f+F_r}\right)} (n-m)G.
\end{aligned} \label{eq:theorem}
\end{equation}

where $F_f$ and $F_r$, denote the forget frequency and retain frequency respectively in Algorithm~\ref{alg:do+_adamw}.
\end{theorem}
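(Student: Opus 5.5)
The plan is to work with expectations throughout. The recursions (\ref{eq:base_update}) and (\ref{eq:delta_update}) are affine in the gradients and in $B$, so taking $\mathbb{E}_t$ commutes with every update. By Assumption~\ref{assum:input_dynamic} I will therefore replace each forget gradient by $mG$ and each retain gradient by $nG$, and read the theorem as a statement about the limit of these expectations. The bias-correction factor $1/(1-\beta^t)\to 1$ as $t\to\infty$, so asymptotically $\widehat B$ may be replaced by $B$. This costs only a vanishing perturbation, which enters a contracting linear recursion and hence disappears in the limit. What remains is a deterministic, periodic, affine system with period $F_f+F_r$. Each limit will be the fixed point of the corresponding one-period map; that map is a contraction with factor $\beta^{F_f+F_r}$, $\beta^{F_f}$ or $\beta^{F_r}$ respectively, so the iterates converge geometrically to it.

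\textbf{Step 1 (base state).} Take one cycle starting at a multiple of $F_f+F_r$: $F_f$ forget steps followed by $F_r$ retain steps. Iterating $B\leftarrow\beta B+(1-\beta)c$ for $k$ steps gives $B\mapsto \beta^k B+(1-\beta^k)c$. The cycle map is therefore
\[
B\mapsto \beta^{F_r}\bigl(\beta^{F_f}B+(1-\beta^{F_f})mG\bigr)+(1-\beta^{F_r})nG .
\]
Solving for its fixed point $B^*$ gives exactly the first formula in (\ref{eq:theorem}). I also record the base value at the switch point, $B'=\beta^{F_f}B^*+(1-\beta^{F_f})mG$, which is the fixed point of the analogous cycle started at the retain phase.

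\textbf{Step 2 (delta states).} In the limiting cycle, the $j$-th forget step ($j=1,\dots,F_f$) uses the base state before its own update, as in Algorithm~\ref{alg:do+_adamw}. That base value is $B_{j-1}=\beta^{j-1}B^*+(1-\beta^{j-1})mG$, so the residual is $mG-B_{j-1}=\beta^{j-1}(mG-B^*)$. The state $\Delta_f$ is untouched during retain steps. Over one cycle it therefore evolves as
\[
\Delta_f\mapsto \beta^{F_f}\Delta_f+(1-\beta)\sum_{j=1}^{F_f}\beta^{F_f-j}\beta^{j-1}(mG-B^*)=\beta^{F_f}\Delta_f+(1-\beta)F_f\beta^{F_f-1}(mG-B^*).
\]
The geometric factors collapse to the constant $\beta^{F_f-1}$, which produces the factor $F_f$. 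The fixed point is $F_f\beta^{F_f-1}(1-\beta)(mG-B^*)/(1-\beta^{F_f})$. A direct simplification gives $mG-B^*=(1-\beta^{F_r})(m-n)G/(1-\beta^{F_f+F_r})$, which yields the second formula. The retain case is symmetric: replace $B^*$ by $B'$, and note that $nG-B'=(1-\beta^{F_f})(n-m)G/(1-\beta^{F_f+F_r})$. Its fixed point is then $F_r\beta^{F_r-1}(1-\beta)(nG-B')/(1-\beta^{F_r})$, which gives the third formula.

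\textbf{Main obstacle.} The step needing the most care is coupling the delta recursion to a base state that is itself still converging, together with the bias-correction perturbation. I plan to handle both with one lemma: if $x_{k+1}=a x_k+b+\epsilon_k$ with $|a|<1$ and $\epsilon_k\to 0$, then $x_k\to b/(1-a)$. I would apply it to the cycle-sampled sequences. For the delta states, $\epsilon_k$ collects two contributions: the error from $B$ at the start of cycle $k$ not yet equalling $B^*$, which decays like $\beta^{(F_f+F_r)k}$, and the bias-correction term, which decays like $\beta^t$. Both vanish, so the lemma applies and the limits are those computed in Step 2. Finally, I would state explicitly that the limits are taken at times $(F_f+F_r)T$, that is at cycle boundaries, which is why the formulas depend on the phase.
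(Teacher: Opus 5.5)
Your proposal is correct and follows essentially the same route as the paper's proof. Both take one-cycle affine recursions in expectation, observe that the within-phase residuals are $\beta^{j-1}(mG-B^*)$ and $\beta^{j-1}(nG-B')$ so the sums collapse to $F_f\beta^{F_f-1}$ and $F_r\beta^{F_r-1}$, and then solve the resulting fixed-point equations. Your perturbation lemma for $x_{k+1}=ax_k+b+\epsilon_k$ adds rigor the paper lacks: the paper simply asserts that the limits exist and replaces $\widehat{B}$ by $B$ without showing that the transient and bias-correction errors vanish.
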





The proof is deferred to Appendix \ref{sec:theorem_proof}. According to Theorem \ref{theorem}, we can clearly see the expected magnitude of the base state is an interpolation of the expected gradients from the forgetting and the retaining objectives, while the delta states are closely related to their differences. This is consistent with the motivation of DualOptim+.
Specifically, we highlight the behavior of DualOptim+ under the boundary conditions determined by $m$ and $n$:
\vspace{-0.2cm}
\begin{enumerate}[leftmargin=*]
    \item \textbf{Positive Correlation:} When the expectations of gradients are identical $m = n$, the base state $B$ converges to $mG$, and the delta states $\Delta_f$ and $\Delta_r$ converge to $0$. In this case, \textbf{DualOptim+ acts like Alternate}.
    \vspace{-0.2cm}
    \item \textbf{Negative Correlation:} When the expectations of gradients are negatively proportional with a specific factor $m=-\frac{1-\beta^{F_r}}{\beta^{F_r}(1 - \beta^{F_f})}n$, the base state $B$ converges to $0$. In this case, \textbf{DualOptim+ acts like DualOptim}.
\end{enumerate}    
\vspace{-0.2cm}


\begin{figure*}[tb]
    \centering
    \subfigure[Update Similarity]{
    \includegraphics[width=0.35\textwidth]{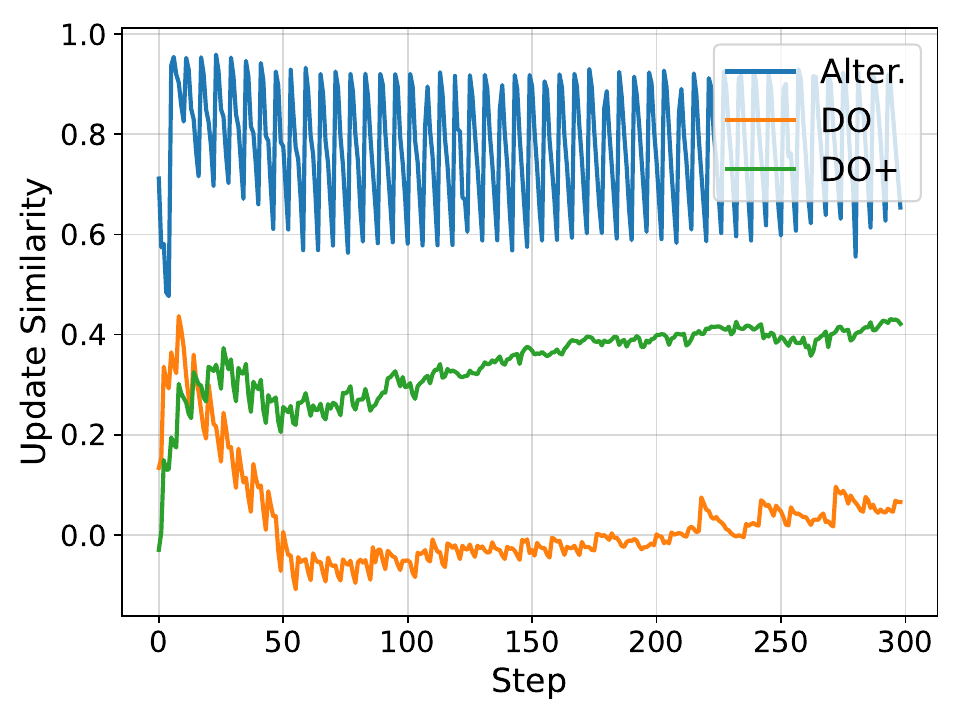}
    }
    \subfigure[Gradient Similarity]{
    \includegraphics[width=0.35\textwidth]{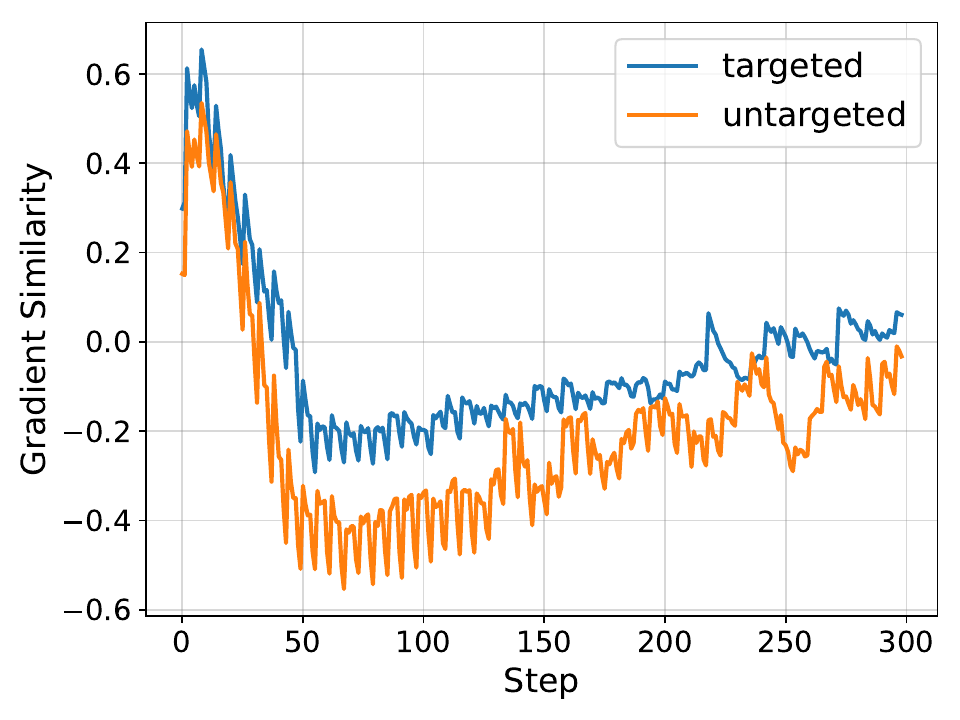}
    }
    \vspace{-0.5em}
    \caption{Comparison of cosine similarity over time steps. \textbf{(a)} Similarity between the update terms for forgetting and retaining of different methods using targeted unlearning objective \citep{yuan2024closer}. \textbf{(b)} Similarity between forgetting and retaining gradients of targeted and untargeted unlearning objectives. For better visualization, the similarity is calculated based on the exponential moving average (EMA) of the gradients with a factor of 0.9. Note that the results are collected based on the forget10 task of TOFU \citep{mainitofu}. The model is TOFU-finetuned Phi-1.5 \citep{li2023textbooks}, and the optimizer is AdamW.} 
    \label{fig:numerical_results}
    \vspace{-1em}
\end{figure*}

\textbf{Numerical Analysis.} We conduct numerical analysis to validate the motivation of DualOptim+.
In Figure~\ref{fig:numerical_results} (a), we show the cosine similarly between the two consecutive updates from the forgetting term and the retaining term for different algorithms. Alternate uses the shared optimizer term, so the similarity is high. By contrast, DualOptim fully decouples these optimizer states and has low similarity. The similarity for DualOptim+ is between Alternate and DualOptim, consistent with our motivation.

Moreover, as illustrated in Figure~\ref{fig:numerical_results} (b), the cosine similarity between the forgetting and retaining gradients exhibits significant volatility throughout the process of both targeted and untargeted unlearning. This instability suggests that the relationship between the two objectives is highly dynamic, further underscoring the necessity of an adaptive mechanism to transition between shared and decoupled states.



%% file: Section/experiments.tex
\section{Experiments}
To demonstrate the effectiveness of DualOptim+ in LLM unlearning, we first define the evaluation metrics employed and then present extensive experimental results on unlearning, safety alignment, and multi-task learning tasks. In addition, a comprehensive ablation study is conducted for further analysis. Detailed implementation settings are provided in Appendix \ref{sec:imple}.

\subsection{Evaluation Metrics}
To comprehensively evaluate the performance of the unlearned model, we adopt the evaluation framework established in \citet{yuan2024closer}, focusing on \textbf{Model Utility (MU)} and \textbf{Forget Efficacy (FE)}, which are calculated over the retain and forget sets, respectively. These aggregated metrics are based on six primary indicators, including ROUGE (R), Probability (P), Truth Ratio (TR), Token Entropy (TE), Cosine Similarity (CS), and Entailment Score (ES), except that TE is excluded from the calculation of FE.

However, standard FE primarily measures the discrepancy between the model's output and the ground truth in the forget set. While suitable for untargeted unlearning, this is insufficient for targeted unlearning scenarios where the model is expected to provide a specific type of reject response. To address this, we introduce \textbf{Targeted Forget Efficacy (TFE)}, which quantifies the similarity between the model's output and predefined rejection templates (e.g., ``Sorry, I don't know"). Given that rejection responses are randomly sampled from these templates, providing no fixed ground truth, we define TFE as the harmonic mean of Token Entropy (TE) and Entailment Score (ES) relative to the rejection response. Consequently, we rename the original Forget Efficacy as \textbf{Untargeted Forget Efficacy (UFE)}. In addition, we list some examples and their corresponding UFEs and TFEs in Appendix \ref{sec:example}.

As a result, when evaluating the forget efficacy,  we use the average of TFE and UFE for targeted unlearning and merely UFE for untargeted unlearning.
We define the \textbf{Overall Performance (OVR)} as the average of forget efficacy and model utility, so  we have $\mathrm{OVR} = 0.25 \times (\mathrm{TFE} + \mathrm{UFE}) + 0.5 \times \mathrm{MU}$ for targeted unlearning and $\mathrm{OVR} = 0.5 \times (\mathrm{UFE} + \mathrm{MU})$ for untargeted unlearning.

\subsection{Unlearning in Fictitious Scenario} \label{sec:tofu}
\begin{table*}[ht]
    \centering
    \caption{Performance comparison of different methods on TOFU-finetuned \textbf{Phi 1.5} and \textbf{Llama 2}. \textbf{IDK+GD} (targeted) and \textbf{ME+GD} (untargeted) are adopted as the loss functions for unlearning. The results include Untargeted Forget Efficacy (UFE), Targeted Forget Efficacy (TFE), Model Utility (MU) and the Overall Performance (OVR) for forget 1\%, 5\% data, and 10\% data. Note that the reported results are the average of the results obtained from $5$ runs with different forget sets.} 
    \label{tab:tofu_idk_me}
    \tabcolsep=0.4em
    \small
    \begin{tabular}{c|c|cccc|cccc|cccc}
        \toprule[1.2pt]
        \multirow{3}{*}{\textbf{Loss}} & \multirow{3}{*}{\textbf{Method}} & \multicolumn{12}{c}{\textbf{Phi 1.5}}\\
        ~&~ & \multicolumn{4}{c|}{forget 1\% data} & \multicolumn{4}{c|}{forget 5\% data} & \multicolumn{4}{c}{forget 10\% data} \\
        ~&~ & UFE $\uparrow$ & TFE  $\uparrow$ &  MU $\uparrow$ & OVR $\uparrow$& UFE $\uparrow$ & TFE  $\uparrow$ &  MU $\uparrow$ & OVR $\uparrow$ & UFE $\uparrow$ & TFE  $\uparrow$ &  MU $\uparrow$ & OVR $\uparrow$ \\
        \midrule[1pt]
        \multirow{6}{*}{\rotatebox{90}{IDK+GD}}
         & Joint & \textbf{78.11}& 45.45 & 18.61 & \cellcolor{gray!30}40.19
         & \textbf{72.55} & 58.32 & 36.26 & \cellcolor{gray!30}50.85 
         & \textbf{71.65} & 64.39 & 33.92 & \cellcolor{gray!30}50.97\\
        ~& Alternate &  73.35 & 62.49 & 48.14 & \cellcolor{gray!30}58.03
         & 67.73 & 64.30 & 47.81 & \cellcolor{gray!30}56.91
         & 65.82 & 64.46 & 49.54 & \cellcolor{gray!30}57.34\\
        ~& DO & 74.75 & 63.51 & 46.46& \cellcolor{gray!30}57.80
         & \ul{68.49} & 64.34 & 49.50 & \cellcolor{gray!30}57.96
         &\ul{65.87} & \textbf{66.84} & 50.25 & \cellcolor{gray!30}58.30\\
        ~& DO 8bit & \ul{75.58}& 61.23& 46.73 & \cellcolor{gray!30}57.57
         & 68.34 & 64.33 & 48.41 & \cellcolor{gray!30}57.37
         & 65.81 & 65.60 & 50.38 & \cellcolor{gray!30}58.05\\
        ~& \textbf{DO+} & 75.51 & \ul{67.85} & \textbf{47.69} & \cellcolor{gray!30}\textbf{59.69}
         & 67.63 & \textbf{67.60} & \textbf{51.52} & \cellcolor{gray!30}\textbf{59.57}
         & 65.42 & \ul{66.50} & \textbf{51.32} & \cellcolor{gray!30}\textbf{58.64}\\
        ~& \textbf{DO+ 8bit} & 73.69 & \textbf{68.36} & \ul{47.53} & \cellcolor{gray!30}\ul{59.28} 
         & 67.56 & \ul{65.94} & \ul{50.36} & \cellcolor{gray!30}\ul{58.55}
         & 65.26 & 65.59& \ul{51.30} & \cellcolor{gray!30}\ul{58.36}\\
        \midrule[1pt]
        \multirow{6}{*}{\rotatebox{90}{ME+GD}} 
         & Joint & \textbf{95.41} & -- & 11.45 & \cellcolor{gray!30}53.43
         & 91.32 & -- & 33.87 & \cellcolor{gray!30}62.60
         & 91.10 & -- & 36.88 & \cellcolor{gray!30}63.99\\
        ~& Alternate & 91.46 & -- & 45.78 & \cellcolor{gray!30}68.62
         & 92.30 & -- & 49.73& \cellcolor{gray!30}71.02 
         & 91.96 & -- & 48.48 & \cellcolor{gray!30}70.22\\
        ~& DO & 92.79 & --& 45.26  & \cellcolor{gray!30}69.03
         & 91.97 & --& 51.73 & \cellcolor{gray!30}71.86
         & 92.39 & -- & 49.23 & \cellcolor{gray!30}70.81\\
        ~& DO 8bit & 92.83 & -- & 45.75 & \cellcolor{gray!30}69.29
         & \textbf{93.12} & --& \textbf{50.99} & \cellcolor{gray!30}\textbf{72.06}
         & 92.32 & -- & 48.04 & \cellcolor{gray!30}70.19\\
        ~& \textbf{DO+} & \ul{93.92}& -- & \textbf{46.19} & \cellcolor{gray!30}\textbf{70.06}
         & \ul{93.07} & -- & \ul{50.87}& \cellcolor{gray!30}\ul{71.97}
         & \ul{92.46} & -- & \textbf{50.32} & \cellcolor{gray!30}\textbf{71.39}\\
        ~& \textbf{DO+ 8bit} & 92.48 & -- & \ul{46.16} & \cellcolor{gray!30}\ul{69.32}
         & 93.13 & --& 50.55 & \cellcolor{gray!30}71.84
         & \textbf{92.78} & -- & 49.96 & \cellcolor{gray!30}\ul{71.37}\\
        \bottomrule[1.2pt]
        \toprule[1.2pt]
        \multirow{3}{*}{\textbf{Loss}} & \multirow{3}{*}{\textbf{Method}} & \multicolumn{12}{c}{\textbf{Llama 2}}\\
        ~&~ & \multicolumn{4}{c|}{forget 1\% data} & \multicolumn{4}{c|}{forget 5\% data} & \multicolumn{4}{c}{forget 10\% data} \\
        ~&~ & UFE $\uparrow$ & TFE  $\uparrow$ &  MU $\uparrow$ & OVR $\uparrow$& UFE $\uparrow$ & TFE  $\uparrow$ &  MU $\uparrow$ & OVR $\uparrow$ & UFE $\uparrow$ & TFE  $\uparrow$ &  MU $\uparrow$ & OVR $\uparrow$ \\
        \midrule[1pt]
        \multirow{6}{*}{\rotatebox{90}{IDK+GD}}
         & Joint & \textbf{85.96} & \ul{59.50} & 38.62 & \cellcolor{gray!30}55.68
         & \textbf{80.29} & \ul{68.60} & 55.63 & \cellcolor{gray!30}65.04
         & \textbf{78.08} & \textbf{71.25} & 57.15 & \cellcolor{gray!30}65.91\\
        ~& Alternate & 81.97 &\textbf{66.41} & 73.93 & \cellcolor{gray!30}74.06
        &75.79 & \textbf{70.27} & 73.93 & \cellcolor{gray!30}73.48
        & 74.16 & \ul{68.15} & 73.98 & \cellcolor{gray!30}72.57\\
        ~& DO & \ul{83.12} & \textbf{66.41} & 73.83 & \cellcolor{gray!30}74.30
        &\ul{76.58} & \textbf{70.27} & 73.64& \cellcolor{gray!30}73.53
        & \ul{74.62} & \ul{68.15} & 73.15 & \cellcolor{gray!30}72.27\\
        ~& DO 8bit & 83.11 &\textbf{66.41} & 73.77 & \cellcolor{gray!30}74.27
        & 76.42 & \textbf{70.27} & 73.71 & \cellcolor{gray!30}73.53
        & \ul{74.62} & \ul{68.15} & 73.38 & \cellcolor{gray!30}72.38\\
        ~& \textbf{DO+} & 78.76 & \textbf{66.41} & \textbf{77.40} & \cellcolor{gray!30}\textbf{74.99}
        & 75.11 & \textbf{70.27} & \ul{75.91} & \cellcolor{gray!30}\ul{74.30}
        & 73.91 & \ul{68.15} & \ul{75.46} & \cellcolor{gray!30}\ul{73.25}\\
        ~& \textbf{DO+ 8bit} &79.05 & \textbf{66.41} & \ul{76.53} & \cellcolor{gray!30}\ul{74.63}
        & 75.04 & \textbf{70.27} &\textbf{76.27} & \cellcolor{gray!30}\textbf{74.42}
        & 73.49 & \ul{68.15} & \textbf{76.14} & \cellcolor{gray!30}\textbf{73.48}\\
        \midrule[1pt]
        \multirow{6}{*}{\rotatebox{90}{ME+GD}} 
         & Joint & 95.89 & -- & 59.70 & \cellcolor{gray!30}77.80
         & \textbf{97.65} & -- & 57.15 & \cellcolor{gray!30}77.40
         & \textbf{97.66} & --& 60.63 & \cellcolor{gray!30}79.15\\
        ~& Alternate & 97.25 & -- & 74.89 & \cellcolor{gray!30}86.07
        & \ul{97.07} & -- & 75.64 & \cellcolor{gray!30}86.36
        & \ul{96.86} & -- & 75.34 & \cellcolor{gray!30}86.10\\
        ~& DO & 97.46 & -- & 75.06 & \cellcolor{gray!30}86.26
        & 96.66 & -- &75.90 & \cellcolor{gray!30}86.28
        & 96.78& --& \ul{75.60} & \cellcolor{gray!30}\ul{86.19}\\
        ~& DO 8bit & \ul{97.76} & -- & \ul{75.55} & \cellcolor{gray!30}\ul{86.66}
        & 96.74 & -- & 75.52 & \cellcolor{gray!30}86.13
        & 96.57 & -- & 75.44 & \cellcolor{gray!30}86.01\\
        ~& \textbf{DO+} & \textbf{97.88} & -- & \textbf{75.82} & \cellcolor{gray!30}\textbf{86.85}
        & 96.91 & -- & \ul{76.09} & \cellcolor{gray!30}\textbf{86.50}
        & 96.85 & -- & \textbf{75.86} & \cellcolor{gray!30}\textbf{86.35}\\
        ~& \textbf{DO+ 8bit} & 97.50 & -- &  75.01 & \cellcolor{gray!30}86.26
        & 96.69 & -- & \textbf{76.16} & \cellcolor{gray!30}\ul{86.43}
        & 96.78 & -- & 75.52 & \cellcolor{gray!30}86.15\\
        \bottomrule[1.2pt]
    \end{tabular}
    \vspace{-1em}
\end{table*}

We start with evaluating our method on the standard TOFU benchmark \citep{mainitofu}. TOFU simulates an ideal scenario with full data access, featuring 200 fictitious authors (20 QA pairs each). It includes three tasks: forget01, forget05, and forget10, targeting the removal of 1\%, 5\%, and 10\% of the data, respectively. The remaining data constitutes the retain set. Additionally, ``Real Authors'' and ``World Facts'' sets are used to evaluate general knowledge utility. We employ Phi-1.5-1.3B \citep{li2023textbooks} and Llama-2-7B \citep{touvron2023llama} released by TOFU as the target models, which has been fine-tuned on the constructed data to ensure it can exactly gives answers to questions in TOFU.

To evaluate the effectiveness of \textbf{DualOptim+ (DO+)}, we compare it against three baselines: \textbf{Joint}, \textbf{Alternate}, and \textbf{DualOptim (DO)} \citep{zhong2025dualoptim}. Notably, DO and DO+ require additional memory for optimizer states, specifically $2\times$ and $3\times$ the memory consumption of a single standard Adam, respectively. To mitigate this overhead, we introduce their 8-bit versions, \textbf{DO 8bit} and \textbf{DO+ 8bit}, implemented using the \texttt{bitsandbytes} library to quantize optimizer states. These 8-bit variants significantly reduce memory requirements to $1/2$ and $3/4$ that of a single Adam. To quantify this, we compare the running time and memory consumption of the evaluated methods in Appendix \ref{sec:compare_time_memory}.

The results on IDK+GD (targeted) \citep{mainitofu} and ME+GD (untargeted) \citep{yuan2024closer} loss functions are reported in Table \ref{tab:tofu_idk_me}.
We can observe that Joint suffers from excessive forgetting, leading to a significant degradation in model utility. In contrast, DualOptim+ (DO+) achieves the best overall performance in most cases, with its superiority primarily stemming from its ability to preserve model utility while effectively unlearning. 
Additionally, DO+ yields a larger performance gain over DO in targeted unlearning compared to untargeted unlearning, as the forgetting and retaining objectives are less conflicted in the targeted setting, allowing DO+ to better leverage its optimization advantages as an intermediate state between fully shared state and fully decoupled states. 
Moreover, as illustrated in Figure~\ref{fig:curve} of Appendix~\ref{sec:curve_metric}, DO+ demonstrates consistently better and more stable unlearning performance over time steps compared to the baselines.
Regarding efficiency, the proposed DO 8bit and DO+ 8bit variants consume only $1/4$ of the memory required by their standard 32-bit counterparts (DO and DO+) without compromising performance. 

Additionally, the results on DPO+GD \citep{rafailov2023direct} and NPO+GD \citep{zhang2024negative} loss functions are presented in Table \ref{tab:tofu_dpo_npo} of Appendix \ref{sec:tofu_dpo_npo}.
To further evaluate the efficacy of our method with parameter-efficient fine-tuning techniques, we report the results with LoRA \citep{hu2022lora} in Table \ref{tab:tofu_idk_me_lora} of Appendix \ref{sec:tofu_lora}.
 These observations are consistent with that in Table \ref{tab:tofu_idk_me}, indicating the effectiveness of DualOptim+ in broad scenarios.

\subsection{Unlearning in Real-world Scenario} \label{sec:real_world}
\begin{table*}[t]
    \centering
    \caption{Performance comparison of different methods on \textbf{Llama 3} for unlearning real-world data. \textbf{IDK+GD} (targeted) and \textbf{ME+GD} (untargeted) are adopted as the loss functions for unlearning. The results for unlearning tasks include Untargeted Forget Efficacy (UFE), Targeted Forget Efficacy (TFE), Model Utility (MU) and the Overall Performance (OVR). The average metrics (AVG) on downstream tasks are calculated.  Note that the reported results are the average of the results obtained from $3$ runs using different random seeds.} 
    \label{tab:real_world}
    \tabcolsep=0.4em
    \small
    \begin{tabular}{c|c|cccc|cccccc}
        \toprule[1.2pt]
        \multirow{3}{*}{\textbf{Loss}} & \multirow{3}{*}{\textbf{Method}} & \multicolumn{10}{c}{\textbf{Llama 3}}\\
        ~&~ & \multicolumn{4}{c|}{\textbf{Unlearning Task}} & \multicolumn{6}{c}{\textbf{Downstream Tasks}} \\
        ~&~ & UFE $\uparrow$ & TFE  $\uparrow$ &  MU $\uparrow$ & OVR $\uparrow$& ARC-c $\uparrow$& MMLU $\uparrow$& TruthfulQA $\uparrow$& TriviaQA $\uparrow$& GSM8K $\uparrow$& AVG $\uparrow$\\
        \midrule[1pt]
         ~ & Initial & 30.55 & -- & 61.45 & \cellcolor{gray!30}46.00 & 55.38 & 64.59 & 37.33 & 50.93 & 76.12 & \cellcolor{gray!30}56.87\\
        \midrule[1pt]
        \multirow{6}{*}{\rotatebox{90}{IDK+GD}} 
         & Joint & \ul{85.54} & \textbf{72.96} & 27.38 & \cellcolor{gray!30}53.32 & 46.79 & 62.85 & 33.41 & 7.58 & 74.32 & \cellcolor{gray!30}44.99\\
        ~& Alternate& 85.49 & \ul{69.95} & \ul{29.19} & \cellcolor{gray!30}\ul{53.45} & 49.77 & 63.31 & 35.62 & \ul{12.71} & 74.15 & \cellcolor{gray!30}47.11\\
        ~& DO & 85.25 & 69.60 & 28.06 & \cellcolor{gray!30}52.73 & 48.47 & 63.20 & 35.29 & 10.33 & 72.35 & \cellcolor{gray!30}45.93\\
        ~& DO 8bit & 85.28 & 69.60 & 27.68 & \cellcolor{gray!30}52.56 & 48.75 & 63.08 & 35.05 & 11.56 & 72.35 & \cellcolor{gray!30}46.16\\
        ~& \textbf{DO+} & \textbf{85.72} & 69.94 & 27.96 & \cellcolor{gray!30}52.90 & \ul{50.85} & \ul{64.43} & \ul{36.35} & 11.17 & \textbf{76.02} & \cellcolor{gray!30}\ul{47.77}\\
        ~& \textbf{DO+ 8bit} & 85.47 & 69.59 & \textbf{33.36} & \cellcolor{gray!30}\textbf{55.45} & \textbf{52.56} &\textbf{64.51} & \textbf{36.80} & \textbf{17.86} & \ul{75.21} & \cellcolor{gray!30}\textbf{49.39}\\
        \midrule[1pt]
        \multirow{6}{*}{\rotatebox{90}{ME+GD}} 
         & Joint & \textbf{97.97} & -- & 24.53 & \cellcolor{gray!30}61.25 & 43.29 & 63.46 & 25.05 & 29.61 & 62.34 & \cellcolor{gray!30}44.75\\
        ~& Alternate & 97.75 & -- & 35.23 & \cellcolor{gray!30}66.49 &48.66 & 64.00 & 25.38 & \textbf{38.30} & 63.68 & \cellcolor{gray!30}48.00\\
        ~& DO & 97.67 &--& 37.51 & \cellcolor{gray!30}67.60 & 45.36 & 63.27 & 25.34 & \ul{37.45} & 37.07 & \cellcolor{gray!30}41.69\\
        ~& DO 8bit & 97.67 & -- & 35.42 & \cellcolor{gray!30}66.55 & 47.95 & 63.67 & 25.95 & 34.20 & 47.58 & \cellcolor{gray!30}43.87\\
        ~& \textbf{DO+} & \ul{97.85} & -- & \ul{48.40}& \cellcolor{gray!30}\ul{73.13} & \textbf{56.52} & \textbf{64.16} & \textbf{34.80} & 28.08 & \textbf{73.44} & \cellcolor{gray!30}\textbf{51.40} \\
        ~& \textbf{DO+ 8bit} & 97.77 & --& \textbf{49.29}& \cellcolor{gray!30}\textbf{73.52} & \ul{56.45} & \ul{64.14} & \ul{31.29} & 29.22 & \ul{72.38} &\cellcolor{gray!30}\ul{50.70}\\
        \bottomrule[1.2pt]
    \end{tabular}
\end{table*}
We consider a realistic scenario where the unlearning is performed without access to the original training data. Following \citet{liu2024learning,yuan2024closer}, we identify real-world individuals memorized by Llama-3-8B-Instruct \citep{grattafiori2024llama}. We select 20 individuals as unlearning targets, generating the forget set using Llama 3's own responses to 20 questions per person. A neighbor set of 40 additional individuals is selected as the retain set: 20 of which are used for regularization during unlearning, while the remaining 20 are used to evaluate Model Utility. Furthermore, general capability is evaluated via five downstream tasks: ARC-c \citep{Clark2018ThinkYH}, MMLU \citep{hendrycks2021measuring}, TruthfulQA \citep{Lin2021TruthfulQAMH}, TriviaQA \citep{joshi-etal-2017-triviaqa}, and GSM8K \citep{Cobbe2021TrainingVT}. Joint, Alternate, DualOptim (DO), DualOptim+ (DO+), and the 8bit variants of DO and DO+ are evaluated in the experiment. IDK+GD and ME+GD are adopted as the loss functions for targeted and untargeted unlearning, respectively.

As shown in Table \ref{tab:real_world}, DO+ and its 8-bit variant (DO+ 8bit) consistently achieves effective forgetting while maintaining superior machine utility on the retain set and downstream tasks. Surprisingly, DO underperforms the simpler Alternate approach, particularly in model utility and downstream tasks, suggesting that the fully decoupled framework requires the additional refinements present in DO+ to effectively handle the complexities of real-world data unlearning. Furthermore, compared with untargeted unlearning, the results indicate that targeted unlearning typically leads to a collapse in model utility compared to the initial state, dropping from $61.45$ to roughly half that value. This phenomenon is likely to arise from a rigid mapping to specific "I don't know" responses in targeted objective, leading to catastrophic interference with the model's internal representations. In contrast, the untargeted objective merely aims to suppress the ground-truth response through entropy maximization, imposing a ``softer'' optimization constraint that induces a more marginal representational shift and better preserves the model's underlying reasoning logic.

\subsection{Safety Alignment} \label{sec:safety_align}
In this subsection, we extend our evaluation to safety alignment \citep{Bai2022TrainingAH}, which can be framed as a specialized unlearning task. The objective is to eliminate unsafe knowledge while preserving model utility. Following the experimental setup of \citet{bianchisafety}, our initial model is Llama-3-8B-Instruct fine-tuned on Alpaca \citep{alpaca} to ensure robust instruction-following capabilities. We then simulate the unlearning process by tuning the model on 20,000 Alpaca instructions combined with 2,000 safety instructions; here, the safety and Alpaca instructions serve as the forget and retain sets, respectively. Follow \citet{bianchisafety}, we perform standard SFT on the mixed dataset, which is equivalent to targeted unlearning. The objectives for untargeted unlearning are not considered here, since the objective does not meet the requirements of the task, i.e., reject replying to unsafe queries.

To evaluate the model's harmlessness, we utilize the same collection of harmful instruction datasets as that in \citet{bianchisafety}, including I-MaliciousInstructions (I-Mali) \citep{alpaca}, I-CoNa \citep{fanton-etal-2021-human}, I-Controversial (I-Cont) \citep{bianchisafety}, and Q-Harm \citep{Bai2022TrainingAH}. Response safety is assessed using Llama-Guard-2-8B \citep{metallamaguard2}, with the final safety rate reported as the primary metric. Model utility is measured following the methodology described in Sec. \ref{sec:real_world}. Additionally, we calculate the over-refusal rate on XSTest \citep{rottger-etal-2024-xstest} to identify exaggerated safety behaviors. We compare all methods previously discussed in Sec. \ref{sec:tofu} and \ref{sec:real_world}.

\begin{table*}[h]
    \centering
    \caption{Performance comparison of different methods on \textbf{Alpaca-finetuned Llama 3} for safety alignment. The averages (AVG) of the metrics on safety and utility tasks are calculated, respectively. XSTest is used to evaluate the over-refusal rate. Note that the reported results are the average of the results obtained from $3$ runs using different random seeds.} 
    \label{tab:safe_align}
    \tabcolsep=0.3em
    \small
    \resizebox*{\textwidth}{!}{
    \begin{tabular}{c|ccccc|cccccc|c|c}
        \toprule[1.2pt]
        \multirow{3}{*}{\textbf{Method}} & \multicolumn{13}{c}{\textbf{Alpaca-Llama 3}}\\
        ~ & \multicolumn{5}{c|}{\textbf{Safety}} & \multicolumn{6}{c|}{\textbf{Utility}} & \multirow{2}{*}{\textbf{OVR} $\uparrow$}  & \multirow{2}{*}{\textbf{XSTest} $\downarrow$}\\
        ~ & I-Mali $\uparrow$ & I-CoNa  $\uparrow$ &  I-Cont $\uparrow$ & Q-Harm $\uparrow$&  AVG $\uparrow$& ARC-c $\uparrow$& MMLU $\uparrow$& TruthfulQA $\uparrow$& TriviaQA $\uparrow$& GSM8K $\uparrow$& AVG $\uparrow$&~ & \\
        \midrule[1pt]
         Initial & 28.00 & 38.76 & 55.00 & 64.00 & \cellcolor{gray!30}46.44 & 45.56 & 52.53 & 29.74 & 12.11 & 13.12 & \cellcolor{gray!30}30.61 & \cellcolor{gray!30}33.56  &\cellcolor{gray!30}0.40\\
        \midrule[1pt]
        Joint & 94.67 & 96.63 & \textbf{97.50} & 97.00 & \cellcolor{gray!30}96.45 & 47.04 & 51.63& 33.74 & 12.18 & 14.10 & \cellcolor{gray!30}31.74 & \cellcolor{gray!30}54.84 & \cellcolor{gray!30}\textbf{28.00}\\
        Alternate& \textbf{97.00} & 97.38 & \textbf{97.50} & \textbf{99.67} & \cellcolor{gray!30}\textbf{97.89} & 46.81 & 50.83 & \textbf{34.60} & 13.83 & 12.94 & \cellcolor{gray!30}31.80 & \cellcolor{gray!30}55.67 & \cellcolor{gray!30}29.20\\
        DO & 95.67 & \ul{97.94} & \textbf{97.50} & 99.30 & \cellcolor{gray!30}97.61 & \textbf{47.36} & 50.13 & 33.58  & \ul{14.02} & 12.99 & \cellcolor{gray!30}31.62 & \cellcolor{gray!30}\ul{55.76} & \cellcolor{gray!30}30.27\\
        DO 8bit & \ul{96.00} & \textbf{98.31} &\ul{95.50} & 99.00 & \cellcolor{gray!30}97.20 & 47.10 & 50.78& 33.58 & 13.82 & 12.96 & \cellcolor{gray!30}31.65 & \cellcolor{gray!30}54.66 & \cellcolor{gray!30}28.27\\
        \textbf{DO+} & \ul{96.00} & 97.56 & \textbf{97.50} & 98.67 & \cellcolor{gray!30}97.43 & \ul{47.27} & \textbf{51.89}& 32.25 & \textbf{15.39} & \ul{14.23} & \cellcolor{gray!30}\textbf{32.81} & \cellcolor{gray!30}\textbf{56.45} & \cellcolor{gray!30}\ul{28.13}\\
        \textbf{DO+ 8bit} & \ul{96.00} & \textbf{98.31} & \textbf{97.50} & \ul{99.33} & \cellcolor{gray!30}\ul{97.79} & 46.93 & \ul{51.66} & \ul{34.47} & 13.71 & \textbf{14.73} & \cellcolor{gray!30}\ul{32.30} & \cellcolor{gray!30}55.61 & \cellcolor{gray!30}28.27\\
        \bottomrule[1.2pt]
    \end{tabular}}
\end{table*}

As shown in Table \ref{tab:safe_align}, DO+ achieves the superior safety-utility trade-off among the evaluated methods. While all unlearning-based approaches significantly improve the model's safety metrics compared with the initial state, DO+ distinguishes itself by attaining the highest overall performance (56.45\%) and the highest average utility score (32.81\%). This indicates that DO+ is particularly effective at erasing harmful knowledge while preserving the model's core capabilities. Furthermore, DO+ maintains a relatively low over-refusal rate of 28.13\% on the XSTest benchmark, which is notably lower than that of the standard DO (30.27\%) and Alternate (29.20\%) methods. These results suggest that DO+ not only successfully aligns the model with safety requirements but also effectively mitigates the common pitfall of exaggerated safety behaviors.

\subsection{Multi-task Learning}
Our method can be easily extended to multi-task learning tasks. For $N$ tasks, we need to maintain one base state and $N$ delta states. Specifically, we finetune Llama-2-7B on three different tasks, i.e., Py150 (code) \citep{lu2021codexglue}, ScienceQA (science) \citep{mishra-etal-2022-numglue}, NumGLUE-cm (math) \citep{lu2022learn}. The datasets are collected from TRACE \citep{wang2023trace}, and we only evaluate DO 8bit and DO+ 8bit to reduce memory consumption. 

The results listed in Table \ref{tab:mtl} indicate that our method is still effective in the context of multi-task learning. Note that in the context of unlearning, the severity of gradient conflicts gives DO a distinct advantage. Conversely, in multi-task learning, where these conflicts are less pronounced, the gains from DO are negative, whereas DO+ continues to deliver a substantial boost in performance.

\begin{table}[H]
    \centering
    \vspace{-0.5em}
    \caption{Performance in multi-task learning task. We finetune Llama-2-7B on three different tasks, i.e., Py150 (code), ScienceQA (science) , NumGLUE-cm (math).}
    \small
    \begin{tabular}{c|c c c c}
    \toprule[1.2pt]
       ~ & Py150 & ScienceQA & NumGLUE-cm & AVG\\
    \midrule[1pt]
       Joint & 61.09 & 92.40 & 41.67 & \cellcolor{gray!30}65.05\\
       Alternate & 60.74 & 92.05 & 42.86 & \cellcolor{gray!30}65.22 \\
       DO 8bit & 60.36 & 92.25 & 40.48 & \cellcolor{gray!30}64.36 \\
       \textbf{DO+ 8bit} & 60.87 & 91.75& 48.81 & \cellcolor{gray!30}\textbf{67.14} \\
    \bottomrule[1.2pt]
    \end{tabular}
    \vspace{-1em}
    \label{tab:mtl}
\end{table}

\subsection{Ablation Study}
In this subsection, we conduct ablation study on DualOptim+ for further analysis. If not specified, all experiments are conducted based on forgetting 5\% data of TOFU on Phi1.5 by using IDK+GD loss function. AdamW is adopted as the optimizer.

\textbf{Update timing of base state.} In Table \ref{tab:priority_base}, we evaluate the impact of the update timing for the base state. Specifically, we compare the performance when the base state is updated at different stages of the optimization step. The results demonstrate that DualOptim+ achieves optimal performance when the base state is updated after the parameter update. This strategy ensures that the delta states are calculated against a stable and lagged reference, suppressing oscillations during optimization.
\begin{table}[H]
    \centering
    \caption{Unlearning performance when the base state is updated at different stages.}
    \small
    \begin{tabular}{c|c c c c}
    \toprule[1.2pt]
       Stage & UFE $\uparrow$ & TFE $\uparrow$ & MU $\uparrow$ & OVR $\uparrow$\\
    \midrule[1pt]
       Before $\Delta$ & 67.54 & 66.55 & 50.82 & \cellcolor{gray!30}58.93\\
       After $\Delta$ & \textbf{68.24} & 64.38 & 49.97 & \cellcolor{gray!30}58.14\\
       After $\vtheta$ & 67.63 & \textbf{67.60} & \textbf{51.52} & \cellcolor{gray!30}\textbf{59.57}\\
    \bottomrule[1.2pt]
    \end{tabular}
    \vspace{-1em}
    \label{tab:priority_base}
\end{table}

\textbf{Updating rule of base state.}
In Table \ref{tab:update_rule_base}, we evaluate the performance of DualOptim+ when the base state is updated via different components, specifically comparing raw gradients ($\vg$) against the difference between gradients and bias-corrected delta states ($\vg - \widehat{\Delta}$). The results demonstrate that updating the base state directly with gradients yields the best performance, reinforcing its role in capturing the shared representation between tasks.
\begin{table}[H]
    \centering
    \vspace{-0.5em}
    \caption{Unlearning performance when the base state is updated by different components.}
    \small
    \begin{tabular}{c|c c c c}
    \toprule[1.2pt]
       Updated by & UFE $\uparrow$ & TFE $\uparrow$ & MU $\uparrow$ & OVR $\uparrow$\\
    \midrule[1pt]
       $\vg$ & 67.63 & \textbf{67.60} & \textbf{51.52} & \cellcolor{gray!30}\textbf{59.57}\\
       $\vg-\widehat{\Delta}$ & \textbf{67.78}&66.74&50.89& \cellcolor{gray!30}59.08\\
    \bottomrule[1.2pt]
    \end{tabular}
    \vspace{-1em}
    \label{tab:update_rule_base}
\end{table}

\textbf{Momentum factors.} To avoid introducing additional hyperparameters, we utilize identical momentum factors ($\beta_1, \beta_2$) for both the base and delta states by default. To evaluate the sensitivity of this configuration, we test two distinct momentum factor sets for AdamW: \textbf{Fast} ($\beta_1=0.9, \beta_2=0.95$, the default) and \textbf{Slow} ($\beta_1=0.99, \beta_2=0.999$). We examine the performance across various combinations of these sets as presented in Table \ref{tab:momentum_factor}. Our results confirm that the default setting, where both states employ the fast momentum factors, achieves the best performance. Notably, applying the slow set to both states leads to a significant performance drop, as the resulting updates are overly conservative for the unlearning task.

\begin{table}[H]
    \centering
    \vspace{-0.5em}
    \caption{Unlearning performance when adopting different momentum factors. We introduce two sets of momentum factors, Fast ($\beta_1=0.9$, $\beta_2=0.95$), Slow ($\beta_1=0.99$, $\beta_2=0.999$). The configuration (F, S) means that the Fast (F) set is used to update the delta states and the Slow (S) set is used to update the base state.}
    \small
    \begin{tabular}{c|c c c c}
    \toprule[1.2pt]
       \makecell{Momentum\\factors} & UFE $\uparrow$ & TFE $\uparrow$ & MU $\uparrow$ & OVR $\uparrow$\\
    \midrule[1pt]
       (F, F) & 67.63 & \textbf{67.60} & \textbf{51.52} & \cellcolor{gray!30}\textbf{59.57}\\
       (F, S) & 67.14& 67.00 & 50.69&\cellcolor{gray!30}58.88\\
       (S, F) & \textbf{67.88}&67.53&51.32&\cellcolor{gray!30}59.51\\
       (S, S) & 64.90&64.27&50.60&\cellcolor{gray!30}57.59\\
    \bottomrule[1.2pt]
    \end{tabular}
    \vspace{-1em}
    \label{tab:momentum_factor}
\end{table}

\textbf{Quantization.} To decrease memory overhead, we introduce an 8-bit variant of DualOptim+. Table \ref{tab:quantize} evaluates the impact of quantizing specific optimizer states. The results demonstrate that quantization yields significant memory savings with only acceptable performance degradation. Furthermore, we observe only a marginal performance gap when varying which states are quantized. Therefore, to maximize memory efficiency, we adopt the quantization of all optimizer states as our default configuration.
\begin{table}[H]
    \centering
    \vspace{-0.5em}
    \caption{Unlearning performance when quantizing different states.}
    \small
    \begin{tabular}{c|c c c c}
    \toprule[1.2pt]
        Quantize& UFE $\uparrow$ & TFE $\uparrow$ & MU $\uparrow$ & OVR $\uparrow$\\
    \midrule[1pt]
       None & 67.63 & 67.60 & 51.52 & \cellcolor{gray!30}59.57\\
       $B$ & 67.84 & 66.90 & 50.66 & \cellcolor{gray!30}59.02\\
       $\Delta$ & 67.55 & 66.19 & 50.31 & \cellcolor{gray!30}58.59\\
       $B+\Delta$ & 67.56 & 65.94 & 50.36 & \cellcolor{gray!30}58.55\\
    \bottomrule[1.2pt]
    \end{tabular}
    \vspace{-1em}
    \label{tab:quantize}
\end{table}

\textbf{Retain frequency.} In Table \ref{tab:retain_freq}, we investigate the impact of various retain frequencies ($F_r$) on the Alternate, DualOptim (DO), and DualOptim+ (DO+), while maintaining a fixed forget frequency ($F_f = 1$). The results indicate that the optimal overall performance is achieved at $F_r = 5$ for our method. While the sensitivity to $F_r$ is relatively low, we observe a general trade-off: a smaller $F_r$ tends to enhance forget efficacy at the expense of model utility, whereas a larger $F_r$ better preserves utility but slightly diminishes unlearning effectiveness. Notably, the improvement across different $F_r$ for our method is stable and is consistently better than Alternate and DO.
\begin{table}[H]
    \centering
    \vspace{-0.5em}
    \caption{Overall unlearning performance with different retain frequencies $F_r$. The forget frequency is fixed to 1. The total steps of unlearning is 300.} \label{tab:retain_freq}
    \small
    \begin{tabular}{c|c c c c c c}
    \toprule[1.2pt]
       $F_r$ & 1 & 2 & 4 & 5& 9 & 14\\
    \midrule[1pt]
      Alter. & 55.82 & 56.10 & 56.18 &\textbf{56.28} & 55.79 & 55.06\\
      DO & 56.54 & \textbf{59.46} & 57.96 &59.18 & 58.28 & 56.96\\
      \textbf{DO+} & 59.13 & 60.05 & 60.19 &\textbf{61.30} & 60.85 & 58.16\\
    \bottomrule[1.2pt]
    \end{tabular}
    \vspace{-1em}
\end{table}

\textbf{Hyperparameters of Joint.} In Table \ref{tab:joint_hp}, we report the results of a grid search over various forgetting loss weights (with a fixed retaining loss weight of 1) and learning rates for the Joint updating scheme. The results indicate that the Joint method achieves its optimal performance using the default hyperparameters. However, even with tuned hyperparameters, the Joint method consistently underperforms compared to other baselines, particularly in terms of maintaining model utility.
\begin{table}[H]
    \centering
    \vspace{-0.5em}
    \caption{Unlearning performance of Joint with different \textbf{(a)} forgetting loss weights, and \textbf{(b)} different learning rates.} \label{tab:joint_hp}
    \vspace{-0.5em}
    \small
    \subtable[Forgetting Loss Weight]{
    \begin{tabular}{c|c c c c c c}
    \toprule[1.2pt]
       Weight & 0.1 & 0.2 & 0.4 & 0.6& 0.8 & \textbf{1.0}\\
    \midrule[1pt]
      UFE $\uparrow$ & 72.89 & 73.41 & 73.59 & 74.59 & 74.08 & \textbf{74.78}\\
      TFE $\uparrow$& 54.46 & 54.53 & 61.51 & 62.98 & 65.89 & \textbf{66.45} \\
      MU $\uparrow$ & \textbf{42.82} & 41.78 & 37.98 & 38.44 & 37.50 & 36.90\\
      OVR $\uparrow$ & \cellcolor{gray!30}53.25 & \cellcolor{gray!30}52.88 & \cellcolor{gray!30}52.77 & \cellcolor{gray!30}53.61 & \cellcolor{gray!30}53.74 & \cellcolor{gray!30}\textbf{53.76}\\
    \bottomrule[1.2pt]
    \end{tabular}
    \label{tab:joint_weight}
    }
    \vspace{-0.5em}
    \subtable[Learning Rate]{
    \begin{tabular}{c|c c c c c}
    \toprule[1.2pt]
       Weight &5e-6& \textbf{1e-5}& 2e-5 & 4e-5 & 6e-5\\
    \midrule[1pt]
      UFE $\uparrow$ & 72.72 & 74.78 & 75.61 & 80.21 & \textbf{81.14}\\
      TFE $\uparrow$& 61.62 & 66.45 & \textbf{69.37} & 67.94 & 66.55\\
      MU $\uparrow$ & 34.79 & \textbf{36.90} & 34.94 & 32.95 & 28.10\\
      OVR $\uparrow$ & \cellcolor{gray!30}50.98 & \cellcolor{gray!30}\textbf{53.76} & \cellcolor{gray!30}53.72 & \cellcolor{gray!30}53.51 & \cellcolor{gray!30}50.97\\
    \bottomrule[1.2pt]
    \end{tabular}
    \label{tab:joint_lr}
    }
    \vspace{-1em}
\end{table}

%% file: Section/conclusion.tex
\section{Conclusion}
In this work, we proposed a novel optimization framework named DualOptim+ for LLM unlearning that utilizes a base state to capture shared representations between forgetting and retaining objectives, alongside delta states that preserve objective-specific residuals. Our extensive evaluation across diverse unlearning scenarios demonstrates that DualOptim+ provides a more stable and effective trade-off between knowledge erasure and utility preservation than existing methods. In future work, we aim to explore the applicability of our method in more general scenarios.

\section*{Impact Statement}
Our method is a generalizable optimization framework to help achieve a better trade-off when there are multiple conflicting learning objectives. It can be applied in broad scenarios, including machine unlearning, safety alignment, multi-task learning, etc.

\section*{Acknowledgements}

This work is supported by City University of Hong Kong (Project No. 9220132, 9229203).


%% file: Appendix/alg.tex
\section{Pseudo-code of DualOptim+ with Muon} \label{sec:app_do+_muon}
The pseudo-code of DualOptim+ integrated in Muon \cite{jordan2024muon} is shown in Algorithm \ref{alg:do+_muon}.
\begin{algorithm}[H]
   \caption{DualOptim+ with Muon}
   \label{alg:do+_muon}
   \setstretch{1.07}
\begin{algorithmic}[1]
    \STATE {\bfseries Input:} parameter $\vtheta$, learning rate $\eta$, momentum factor $\beta$, forget objective $\gL_f$, retain objective $\gL_r$, total steps $N$, forget frequency $F_f$, retain frequency $F_r$
    \STATE {\bfseries Initialize:} $\vm_{\Delta_f}\leftarrow 0$, $\vm_{\Delta_r}\leftarrow 0$, $\vm_B\leftarrow 0$
    \FOR{$t=1$ \textbf{to} $N$}
        \IF{$t\bmod (F_f + F_r) \leq F_f$} 
            \STATE $\vg,~\vm_{\Delta} \leftarrow \nabla_{\vtheta}{\gL_f}(\vtheta),~\vm_{\Delta_f}$
        \ELSE
            \STATE $\vg ,~\vm_{\Delta}\leftarrow \nabla_{\vtheta}{\gL_r}(\vtheta),~\vm_{\Delta_r}$
        \ENDIF
        \STATE $\vm_{\Delta} \leftarrow \beta\vm_{\Delta} + (\vg - \vm_B)$
        \STATE $\vo \leftarrow \mathtt{NewtonSchulz5}(\vm_B + \vm_{\Delta})$
        \STATE $\vtheta = \vtheta - \eta \vo$
        \STATE $\vm_B \leftarrow \beta\vm_B + \vg$
    \ENDFOR
    \STATE {\bfseries Output:} parameter $\vtheta$
\end{algorithmic}
\end{algorithm}

%% file: Appendix/discussion.tex
\section{Proofs} 

\subsection{Proof of Theorem~\ref{theorem}} \label{sec:theorem_proof}

\begin{proof}

We prove the convergence of $B_t$, $\Delta_f$ and $\Delta_r$ one by one.

\textbf{1. Convergence of $B_t$}

Based on the update rule~(\ref{eq:base_update}), we have the following equation:
\begin{equation}
    B_{(F_f + F_r)T} = \beta^{F_f + F_r} B_{(F_f + F_r)(T-1)} + (1 - \beta) \left(\sum_{t = 1}^{F_f} \beta^{F_f + F_r - t} g_{f, (F_f + F_r)(T-1) + t} + \sum_{t = 1}^{F_r} \beta^{F_r - t} g_{r, (F_f + F_r)(T - 1) + F_f + t}\right)
\end{equation}

Based on Assumption~\ref{assum:input_dynamic}, we can conclude that $\lim_{T \to \infty} B_{(F_f + F_r)T}$ exists, so we let $X_B = \lim_{T \to \infty} B_{(F_f + F_r)T}$.
We consider the equation above, take the expectation over $t$, when $T \to \infty$, we have:
\begin{equation} \label{eq:X_B}
    X_B = \beta^{F_f + F_r} X_B + (1 - \beta) \left(\sum_{t = 1}^{F_f} \beta^{F_f + F_r - t} \cdot mG + \sum_{t = 1}^{F_r} \beta^{F_r - t} \cdot nG \right)
\end{equation}

Based on the equation above, we have $\lim_{T \to \infty} B_{(F_f + F_r)T} = X_B = \frac{\beta^{F_r}(1 - \beta^{F_f})m + (1 - \beta^{F_r})n}{1 - \beta^{F_f + F_r}} G$.

\textbf{2. Convergence of $\Delta_f$}

When $t \in ((F_f + F_r)(T - 1) + F_f, (F_f + F_r)T]$, $\Delta_f$ is not updated, so $\Delta_{f, (F_f + F_r)T} = \Delta_{f, (F_f + F_r)(T - 1) + F_f}$.
Based on the update rule (\ref{eq:delta_update}), we have the following equation:
\begin{equation} \label{eq:Delta_fT}
    \Delta_{f,(F_f + F_r) T} = \beta^{F_f}\Delta_{f,(F_f+F_r) (T-1)} + (1-\beta) \cdot \sum_{t=1}^{F_f}\beta^{F_f-t} \left(g_{f, (F_f + F_r)(T-1) + t} - \widehat{B}_{(F_f+F_r)(T-1)+t-1}\right)
\end{equation}

When $T\to\infty$, and for $1\leq t\leq F_f$, according to (\ref{eq:X_B}), we have:
\begin{equation} \label{eq:B_Tf}
    \lim_{T \to \infty} B_{(F_f + F_r)(T-1)+t-1} = \beta^{t-1}X_B + (1-\beta)\sum_{k=1}^{t-1}\beta^{t-1-k}\cdot mG = \beta^{t-1}X_B + (1-\beta^{t-1}) mG
\end{equation}

When $t\to\infty$, $\widehat{B}_t = B_t/(1-\beta^t) \to B_t$. Let $X_{\Delta_f} = \lim_{T\to\infty}\Delta_{f,(F_f + F_r) T}$ , based on (\ref{eq:Delta_fT}) and (\ref{eq:B_Tf}), we have:
\begin{equation}
\begin{aligned}
     X_{\Delta_f} &= \beta^{F_f}X_{\Delta_f} + (1-\beta)\cdot \sum_{t=1}^{F_f}\beta^{F_f-t} \left[mG - \beta^{t-1}X_B - (1-\beta^{t-1})mG \right] \\
     &=\beta^{F_f}X_{\Delta_f} + (1-\beta)F_f\beta^{F_f-1}  \left(mG -X_B \right)
\end{aligned}
\end{equation}
Based on the equation above, we have $\lim_{T \to \infty} \Delta_{f,(F_f + F_r)T} = X_{\Delta_f} = \frac{F_f\beta^{F_f-1}(1-\beta)\left(1-\beta^{F_r}\right)(m-n)}{\left(1-\beta^{F_f}\right)\left(1-\beta^{F_f+F_r}\right)}G$

\textbf{3. Convergence of $\Delta_r$}

Similarly to (\ref{eq:Delta_fT}), we have:
\begin{equation} \label{eq:Delta_rT}
    \Delta_{r,(F_f + F_r) T} = \beta^{F_r}\Delta_{r,(F_f + F_r) (T-1)} + (1-\beta) \cdot \sum_{t=1}^{F_r}\beta^{F_r-t} \left(g_{r, (F_f+F_r)(T-1) + F_f + t} - \widehat{B}_{(F_f+F_r)(T-1) + F_f + t-1}\right)
\end{equation}

When $T\to\infty$, and for $1\leq t\leq F_r$, according to (\ref{eq:X_B}), we have:

\begin{equation} \label{eq:B_Tr}
\begin{aligned}
    \lim_{T \to \infty} B_{(F_f+F_r)(T-1)+F_f+t-1} &= \beta^{F_f+t-1}X_B + (1-\beta)\left(\sum_{k = 1}^{F_f} \beta^{F_f + t -1 - k} \cdot mG + \sum_{k = 1}^{t-1} \beta^{t-1-k} \cdot nG\right) \\
    &= \beta^{F_f+t-1}X_B + \beta^{t-1}(1-\beta^{F_f})mG + (1-\beta^{t-1})nG
\end{aligned}    
\end{equation}

When $t \to \infty$, $\widehat{B}_t = B_t / (1 - \beta^t) \to B_t$. Let $X_{\Delta_r} = \lim_{T\to\infty}\Delta_{r,(F_f + F_r) T}$ , based on (\ref{eq:Delta_rT}) and (\ref{eq:B_Tr}), we have:

\begin{equation}
\begin{aligned}
     X_{\Delta_r} &= \beta^{F_r}X_{\Delta_r} + (1-\beta)\cdot \sum_{t=1}^{F_r}\beta^{F_r-t} \left[nG - \beta^{F_f+t-1}X_B - \beta^{t-1}(1-\beta^{F_f})mG - (1-\beta^{t-1})nG \right] \\
     &= \beta^{F_r}X_{\Delta_r} + (1-\beta)F_r\beta^{F_r-1}\left[(\beta^{F_f}-1)mG + nG - \beta^{F_f}X_B\right]
\end{aligned}
\end{equation}
Based on the equation above, we have $\lim_{T \to \infty} \Delta_{r,(F_f+F_r)T} = X_{\Delta_r} = \frac{F_r\beta^{F_r-1}(1-\beta)\left(1-\beta^{F_f}\right)(n-m)}{\left(1-\beta^{F_r}\right)\left(1-\beta^{F_f+F_r}\right)}G$
\end{proof}

%% file: Appendix/experiment.tex
\section{More Experiment Results}\label{sec:more_exp}
\subsection{Comparison of Running Time and Memory Consumption} \label{sec:compare_time_memory}
\begin{table}[h]
    \centering
    \caption{Memory consumption and running time of different methods. The model is Phi1.5-1.3B and the optimizer is AdamW. The total steps of unlearning is 300, the batch size per GPU is $4$. All experiments are implemented on two NVIDIA H20 GPUs.} \label{tab:mem_time}
    \small
    \begin{tabular}{c|c c c c c c}
    \toprule[1.2pt]
       Method & Joint & Alternate & DO & DO 8bit& \textbf{DO+} & \textbf{DO+ 8bit}\\
    \midrule[1pt]
      Memory (GB/GPU) & 33.26 & 33.26 & 36.15& 33.57 & 39.04 & 33.68\\
      Running Time (s)& 870 & 443 & 439 & 464 &475 & 468\\
     
    \bottomrule[1.2pt]
    \end{tabular}
\end{table}
Table~\ref{tab:mem_time} provides an overview of the memory consumption and training efficiency for the various methods evaluated. While DO and DO+ introduce additional memory overhead due to their extra optimizer states, peaking at 39.04 GB/GPU for DO+, their 8-bit implementations, DO 8bit and DO+ 8bit, successfully reduce this footprint to 33.57 GB/GPU and 33.68 GB/GPU, respectively. This demonstrates that the 8-bit variants can achieve near-parity in memory usage with the standard Joint and Alternate baselines, with the slight overhead resulting from maintaining full-precision embedding layers and storing quantization coefficients. Notably, the methods using alternating update scheme offer a nearly $2\times$ speedup over the Joint method. This efficiency gain occurs because the Joint baseline doubles the equivalent batch size by simultaneously processing both forget and retain data in each step.
\subsection{Results on TOFU with DPO+GD and NPO+GD loss functions}\label{sec:tofu_dpo_npo}

As shown in Table \ref{tab:tofu_dpo_npo}, DualOptim+ achieves the best overall performance in most cases, consistent with the observations in Table \ref{tab:tofu_idk_me}. However, compared to IDK+GD and ME+GD, the DPO+GD and NPO+GD configurations fail to achieve either stable forgetting or enhanced model utility. This performance discrepancy can be attributed to the conservative weighting strategy inherent in these methods, which relies heavily on the reference model and limits the optimizer's ability to deviate significantly from the initial state.
\begin{table}[h]
    \centering
    \caption{Performance comparison of different methods on TOFU-finetuned \textbf{Phi 1.5} and \textbf{Llama 2}. \textbf{DPO+GD} (targeted) and \textbf{NPO+GD} (untargeted) are adopted as the loss functions for unlearning. The results include Untargeted Forget Efficacy (UFE), Targeted Forget Efficacy (TFE), Model Utility (MU) and the Overall Performance (OVR) for forget 1\%, 5\% data, and 10\% data. Note that the reported results are the average of the results obtained from $5$ runs with different forget sets.} 
    \label{tab:tofu_dpo_npo}
    \tabcolsep=0.4em
    \small
    \begin{tabular}{c|c|cccc|cccc|cccc}
        \toprule[1.2pt]
        \multirow{3}{*}{\textbf{Loss}} & \multirow{3}{*}{\textbf{Method}} & \multicolumn{12}{c}{\textbf{Phi 1.5}}\\
        ~&~ & \multicolumn{4}{c|}{forget 1\% data} & \multicolumn{4}{c|}{forget 5\% data} & \multicolumn{4}{c}{forget 10\% data} \\
        ~&~ & UFE $\uparrow$ & TFE  $\uparrow$ &  MU $\uparrow$ & OVR $\uparrow$& UFE $\uparrow$ & TFE  $\uparrow$ &  MU $\uparrow$ & OVR $\uparrow$ & UFE $\uparrow$ & TFE  $\uparrow$ &  MU $\uparrow$ & OVR $\uparrow$ \\
        \midrule[1pt]
        \multirow{6}{*}{\rotatebox{90}{DPO+GD}}
         & Joint & 79.43& 23.74 & 32.08 & \cellcolor{gray!30} 41.83 
         &\textbf{77.25} & 39.32 & 33.54 & \cellcolor{gray!30}45.91 
         & \textbf{77.67} & 45.46 & 31.48 & \cellcolor{gray!30}46.52\\
        ~& Alternate &  78.56 & 35.11 & 48.23 & \cellcolor{gray!30}52.53
         & 74.34 & 47.26 & 49.68 & \cellcolor{gray!30}55.24
         & 74.49 & \textbf{57.51} & 49.87 & \cellcolor{gray!30}\ul{57.94}\\
        ~& DO & 81.58 & 55.36 & 46.32& \cellcolor{gray!30}57.39
         & 74.86 & 47.10 & 50.94 & \cellcolor{gray!30}55.96
         &72.97 & 52.61 & \ul{50.28} & \cellcolor{gray!30}56.53\\
        ~& DO 8bit & \textbf{83.77}& \ul{57.56}& \ul{47.73} & \cellcolor{gray!30}\ul{59.20}
         & 75.20 & 41.12 & 51.12 & \cellcolor{gray!30}54.64
         & 72.76 & 40.95 & 50.02 & \cellcolor{gray!30}53.44\\
        ~& \textbf{DO+} & 82.92 & \textbf{61.90} & \textbf{48.05} & \cellcolor{gray!30}\textbf{60.23}
         & \ul{76.04} & \textbf{55.56} & \ul{51.33} & \cellcolor{gray!30}\textbf{58.57}
         & \ul{74.67} & \ul{56.46} & \textbf{50.78} & \cellcolor{gray!30}\textbf{58.18}\\
        ~& \textbf{DO+ 8bit} & \ul{83.46} & 57.44 & 47.61 & \cellcolor{gray!30}59.03 
         & 75.96& \ul{50.13} & \textbf{51.40} & \cellcolor{gray!30}\ul{57.23}
         & 73.14 & 46.21& 50.24 & \cellcolor{gray!30}54.96\\
        \midrule[1pt]
        \multirow{6}{*}{\rotatebox{90}{NPO+GD}} 
         & Joint & \textbf{78.87} & -- & 29.89 & \cellcolor{gray!30}54.38
         & \textbf{74.68} & -- & 31.71 & \cellcolor{gray!30}53.20
         & \textbf{73.73} & -- & 27.78 & \cellcolor{gray!30}50.76\\
        ~& Alternate & 74.62 & -- & 48.32 & \cellcolor{gray!30}61.47
         & 70.46 & -- & 50.17& \cellcolor{gray!30}60.32 
         & 67.81 & -- & 49.15 & \cellcolor{gray!30}58.48\\
        ~& DO & 74.88 & --& 48.47  & \cellcolor{gray!30}61.68
         & 70.60 & --& 49.89 & \cellcolor{gray!30}60.25
         & 67.33 & -- & 48.73 & \cellcolor{gray!30}58.03\\
        ~& DO 8bit & 75.51 & -- & \ul{48.77} & \cellcolor{gray!30}62.14
         & 70.17 & --& 50.27 & \cellcolor{gray!30}60.23
         & 66.88 & -- & \ul{50.07} & \cellcolor{gray!30}58.48\\
        ~& \textbf{DO+} & 75.80& -- & 48.72 & \cellcolor{gray!30}\ul{62.26}
         & \ul{71.63} & -- & \textbf{51.52}& \cellcolor{gray!30}\textbf{61.58}
         & \ul{69.12} & -- & 49.71 & \cellcolor{gray!30}\textbf{59.42}\\
        ~& \textbf{DO+ 8bit} & \ul{76.10} & -- & \textbf{48.79} & \cellcolor{gray!30}\textbf{62.45}
         & 70.75 & --& \ul{50.63} & \cellcolor{gray!30}\ul{60.69}
         & 67.11 & -- & \textbf{51.25} & \cellcolor{gray!30}\ul{59.18}\\
        \bottomrule[1.2pt]
        \toprule[1.2pt]
        \multirow{3}{*}{\textbf{Loss}} & \multirow{3}{*}{\textbf{Method}} & \multicolumn{12}{c}{\textbf{Llama 2}}\\
        ~&~ & \multicolumn{4}{c|}{forget 1\% data} & \multicolumn{4}{c|}{forget 5\% data} & \multicolumn{4}{c}{forget 10\% data} \\
        ~&~ & UFE $\uparrow$ & TFE  $\uparrow$ &  MU $\uparrow$ & OVR $\uparrow$& UFE $\uparrow$ & TFE  $\uparrow$ &  MU $\uparrow$ & OVR $\uparrow$ & UFE $\uparrow$ & TFE  $\uparrow$ &  MU $\uparrow$ & OVR $\uparrow$ \\
        \midrule[1pt]
        \multirow{6}{*}{\rotatebox{90}{DPO+GD}}
         & Joint & \textbf{92.02}& 45.74 & 44.19 & \cellcolor{gray!30}56.54
         & \textbf{88.11} & 45.17 & 66.98 & \cellcolor{gray!30}66.81
         & \textbf{85.14} & \textbf{49.94} & 65.05 & \cellcolor{gray!30}66.30\\
        ~& Alternate & \ul{88.27} &\textbf{53.77} & 73.67 & \cellcolor{gray!30}\textbf{72.35}
        &83.13 & 46.41 & 73.46 & \cellcolor{gray!30}69.11
        & 80.91 & 36.59 & 74.03 & \cellcolor{gray!30}66.39\\
        ~& DO & 89.26 & \ul{48.24} & 72.33 & \cellcolor{gray!30}\ul{70.54}
        &83.67 & 45.31 & 71.42& \cellcolor{gray!30}67.96
        & 80.77 & 38.55 & 72.43 & \cellcolor{gray!30}66.04\\
        ~& DO 8bit & 89.47 &46.54 & 72.32 & \cellcolor{gray!30}70.16
        & \ul{84.49} &\textbf{48.19} & 73.07 & \cellcolor{gray!30}\ul{69.71}
        & \ul{82.18} & 40.49 & 73.31 & \cellcolor{gray!30}67.32\\
        ~& \textbf{DO+} & 79.49 & 45.96 & \textbf{77.51} & \cellcolor{gray!30}70.12
        & 77.99 & 41.86 & \textbf{76.51} & \cellcolor{gray!30}68.22
        & 78.43 & \ul{47.55} & \textbf{77.19} & \cellcolor{gray!30}\textbf{70.09}\\
        ~& \textbf{DO+ 8bit} &82.04 & 43.57 & \ul{77.01} & \cellcolor{gray!30}69.91
        & 81.94 & \ul{47.83} &\ul{75.06} & \cellcolor{gray!30}\textbf{69.98}
        & 80.15 & 45.71 & \ul{75.10} & \cellcolor{gray!30}\ul{69.17}\\
        \midrule[1pt]
        \multirow{6}{*}{\rotatebox{90}{NPO+GD}} 
         & Joint & \ul{77.31} & -- & 55.56 & \cellcolor{gray!30}66.44
         & 69.67 & -- & 63.12 & \cellcolor{gray!30}66.40
         & 67.90 & --& 64.46 & \cellcolor{gray!30}66.18\\
        ~& Alternate & 73.51 & -- & 74.40& \cellcolor{gray!30}73.96
        & 75.32 & -- & \ul{75.69} & \cellcolor{gray!30}75.51
        & \ul{72.77} & -- & \ul{75.90} & \cellcolor{gray!30}\ul{74.34}\\
        ~& DO & 73.83 & -- & 74.51 & \cellcolor{gray!30}74.17
        & 74.31 & -- &74.52 & \cellcolor{gray!30}74.42
        & 71.78& --& 73.99 & \cellcolor{gray!30}72.89\\
        ~& DO 8bit & 73.85 & -- & 74.22 & \cellcolor{gray!30}74.04
        & 75.35 & -- & 75.11 & \cellcolor{gray!30}75.24
        & 72.42 & -- & 75.75 & \cellcolor{gray!30}74.09\\
        ~& \textbf{DO+} & \textbf{78.51} & -- & \textbf{75.07} & \cellcolor{gray!30}\textbf{76.80}
        & \textbf{75.94} & -- & 75.37 & \cellcolor{gray!30}\ul{75.65}
        & 71.95 & -- & 73.95 & \cellcolor{gray!30}72.95\\
        ~& \textbf{DO+ 8bit} & 76.58 & -- &  \ul{74.91} & \cellcolor{gray!30}\ul{75.75}
        & \ul{75.42} & -- & \textbf{76.01} & \cellcolor{gray!30}\textbf{75.72}
            & \textbf{73.79} & -- & \textbf{76.03} & \cellcolor{gray!30}\textbf{74.92}\\
        \bottomrule[1.2pt]
    \end{tabular}
    \vspace{-0.5em}
\end{table}

\subsection{Results on TOFU with LoRA} \label{sec:tofu_lora}
To evaluate the effectiveness of our method within a parameter-efficient fine-tuning framework, we apply LoRA \citep{hu2022lora} with a rank of 8 to Llama 2. As shown in Table \ref{tab:tofu_idk_me_lora}, the performance gains from both DualOptim and DualOptim+ are less pronounced than in full-parameter unlearning, but DualOptim+ and its 8bit variant exhibit the best performance in most cases. Furthermore, the results indicate that the performance gap between LoRA and full-parameter tuning widens as the volume of data to be forgotten increases. This is because the limited degrees of freedom in a low-rank subspace are insufficient to encode the complex modifications required for larger-scale unlearning tasks . Conversely, LoRA significantly mitigates the excessive forgetting on the retain set that is frequently observed when using the Joint updating scheme.
\begin{table}[H]
    \centering
    \caption{Performance comparison of different methods on TOFU-finetuned \textbf{Llama 2} with \textbf{LoRA} ($\mathrm{rank=8}$). \textbf{IDK+GD} (targeted) and \textbf{ME+GD} (untargeted) are adopted as the loss functions for unlearning. The results include Untargeted Forget Efficacy (UFE), Targeted Forget Efficacy (TFE), Model Utility (MU) and the Overall Performance (OVR) for forget 1\%, 5\% data, and 10\% data. Note that the reported results are the average of the results obtained from $5$ runs with different forget sets.} 
    \label{tab:tofu_idk_me_lora}
    \tabcolsep=0.4em
    \small
    \begin{tabular}{c|c|cccc|cccc|cccc}
        \toprule[1.2pt]
        \multirow{3}{*}{\textbf{Loss}} & \multirow{3}{*}{\textbf{Method}} & \multicolumn{12}{c}{\textbf{Llama 2 + LoRA}}\\
        ~&~ & \multicolumn{4}{c|}{forget 1\% data} & \multicolumn{4}{c|}{forget 5\% data} & \multicolumn{4}{c}{forget 10\% data} \\
        ~&~ & UFE $\uparrow$ & TFE  $\uparrow$ &  MU $\uparrow$ & OVR $\uparrow$& UFE $\uparrow$ & TFE  $\uparrow$ &  MU $\uparrow$ & OVR $\uparrow$ & UFE $\uparrow$ & TFE  $\uparrow$ &  MU $\uparrow$ & OVR $\uparrow$ \\
        \midrule[1pt]
        \multirow{6}{*}{\rotatebox{90}{IDK+GD}}
         & Joint & \textbf{73.70} & 59.50 & 73.07 & \cellcolor{gray!30}69.84
         & \textbf{70.10} & 68.60 & 68.82 & \cellcolor{gray!30}69.08
         & \textbf{64.72} & \textbf{67.54} & 64.92 & \cellcolor{gray!30}65.52\\
        ~& Alternate & 71.69 &\ul{66.40} & \textbf{74.57} & \cellcolor{gray!30}\ul{71.81}
        &64.58 & 66.63 & 70.51 & \cellcolor{gray!30}68.06
        & 59.11 & 56.41 & 66.14 & \cellcolor{gray!30}61.95\\
        ~& DO & 72.01 & \ul{66.40} & 74.41 & \cellcolor{gray!30}\ul{71.81}
        &68.65 & 69.76 & 70.56 & \cellcolor{gray!30}69.88
        & \ul{64.17} & 62.90 & 64.42 & \cellcolor{gray!30}64.42\\
        ~& DO 8bit & 72.03 &\ul{66.40} & \ul{74.51} & \cellcolor{gray!30}\textbf{71.86}
        & 68.77 & 69.77 & 70.63 & \cellcolor{gray!30}69.95
        & 64.08 & \ul{63.13} & 64.62 & \cellcolor{gray!30}64.11\\
        ~& \textbf{DO+} & \ul{72.09} & \textbf{66.41} & 73.87 & \cellcolor{gray!30}71.56
        & 68.77 & \ul{70.10} & \ul{73.05} & \cellcolor{gray!30}\ul{71.24}
        & 61.22 & 62.76 & \ul{69.88} & \cellcolor{gray!30}\ul{65.94}\\
        ~& \textbf{DO+ 8bit} &71.76 & \textbf{66.41} & 73.91 & \cellcolor{gray!30}71.50
        &\ul{68.86} & \textbf{70.19} &\textbf{73.11} & \cellcolor{gray!30}\textbf{71.32}
        & 61.26 & 62.57 & \textbf{69.97} & \cellcolor{gray!30}\textbf{65.95}\\
        \midrule[1pt]
        \multirow{6}{*}{\rotatebox{90}{ME+GD}} 
         & Joint & \textbf{96.14} & -- & 73.92 & \cellcolor{gray!30}85.03
         & \ul{95.01} & -- & 75.04 & \cellcolor{gray!30}85.02
         & 93.27 & --& 75.79 & \cellcolor{gray!30}84.53\\
        ~& Alternate & 95.78 & -- & 75.97 & \cellcolor{gray!30}85.88
        & 92.69 & -- & 75.56 & \cellcolor{gray!30}84.13
        & 88.79 & -- & 74.62 & \cellcolor{gray!30}81.71\\
        ~& DO & 95.87 & -- & 76.00 & \cellcolor{gray!30}85.94
        & 94.97 & -- &76.00 & \cellcolor{gray!30}85.49
        & 93.63& --& 75.57 & \cellcolor{gray!30}84.60\\
        ~& DO 8bit & \ul{95.91} & -- & 76.09 & \cellcolor{gray!30}\ul{86.00}
        & \textbf{95.11} & -- & 76.13 & \cellcolor{gray!30}\textbf{85.62}
        & \textbf{93.84} & -- & 75.68 & \cellcolor{gray!30}84.76\\
        ~& \textbf{DO+} & 95.61 & -- & \textbf{76.46} & \cellcolor{gray!30}\textbf{86.04}
        & 94.89 & -- & \textbf{76.28} & \cellcolor{gray!30}\ul{85.59}
        & \ul{93.73} & -- & \ul{76.04} & \cellcolor{gray!30}\textbf{84.89}\\
        ~& \textbf{DO+ 8bit} & 95.04 & -- &  \ul{76.40}& \cellcolor{gray!30}85.73
        & 94.90 & -- & \ul{76.22} & \cellcolor{gray!30}85.56
        & 93.70 & -- & \textbf{76.05} & \cellcolor{gray!30}\ul{84.88}\\
        \bottomrule[1.2pt]
    \end{tabular}
\end{table}

\subsection{Unlearning Metrics and Losses over Time Steps} \label{sec:curve_metric}

As depicted in Figure \ref{fig:curve} (a) and (b), DualOptim+ exhibits the most effective and stable performance among the evaluated methods. While the Joint updating scheme achieves similar forget efficacy, its Model Utility is substantially lower than other methods. As observed in Figure Figure \ref{fig:curve} (c) and (d), both forgetting and retaining losses converge rapidly under the Joint updating scheme, suggesting the model becomes trapped in a suboptimal local minimum. In contrast, methods employing the alternate updating scheme, i.e., Alternate, DualOptim, and DualOptim+, converge more gradually, enhancing model's exploration capability to find superior local minima.
\begin{figure}[H]
    \centering
    \subfigure[Forget Efficacy]{
    \includegraphics[width=0.235\textwidth]{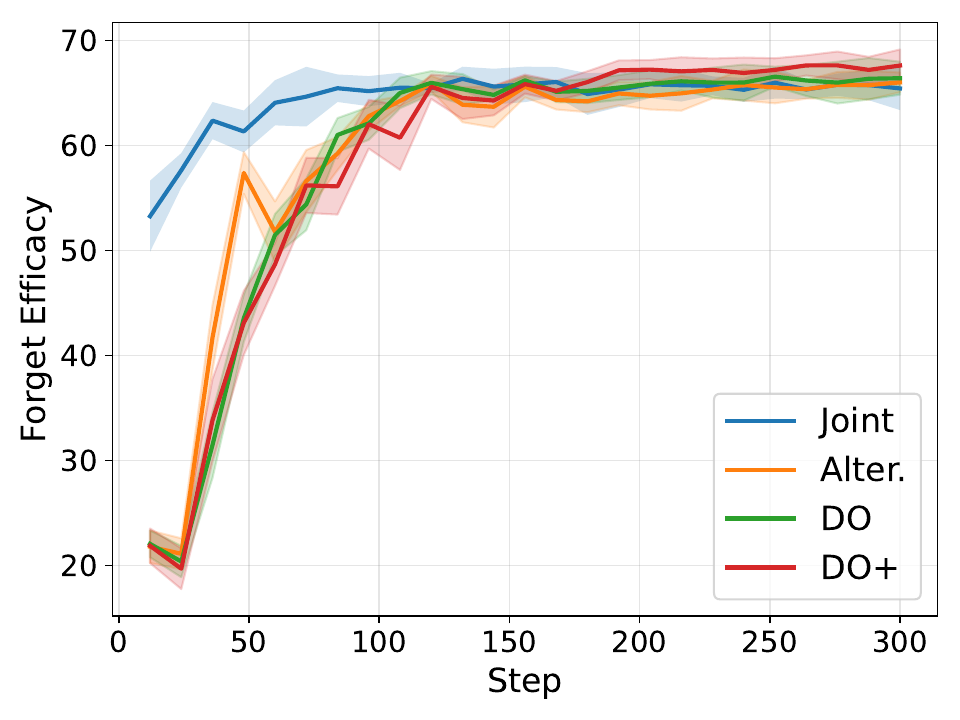}
    }
    \subfigure[Model Utility]{
    \includegraphics[width=0.235\textwidth]{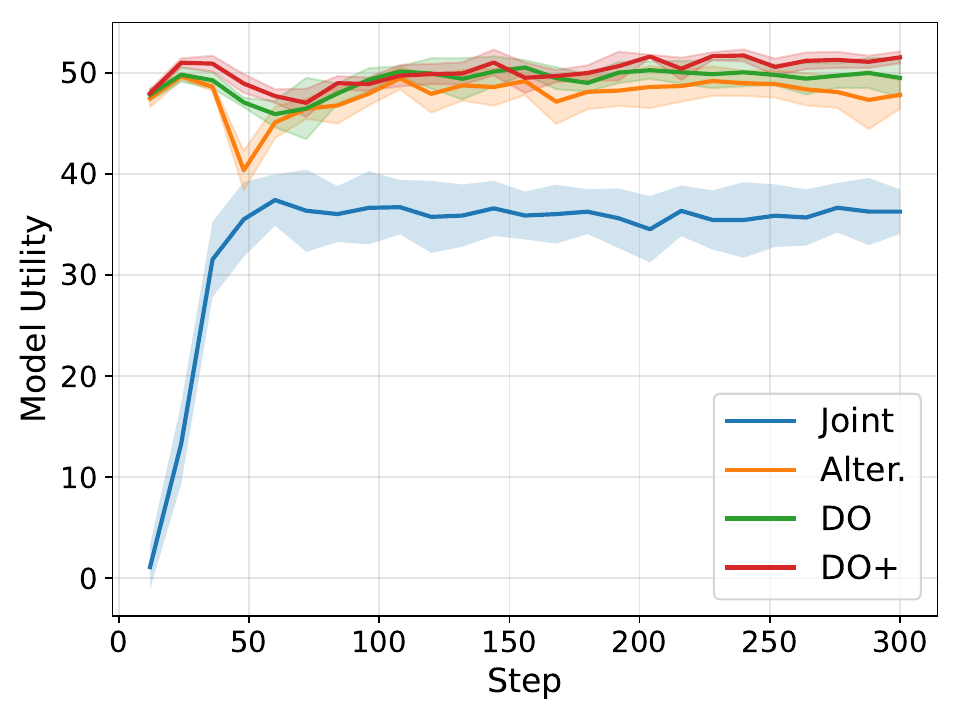}
    }
    \subfigure[Forgetting Loss]{
    \includegraphics[width=0.235\textwidth]{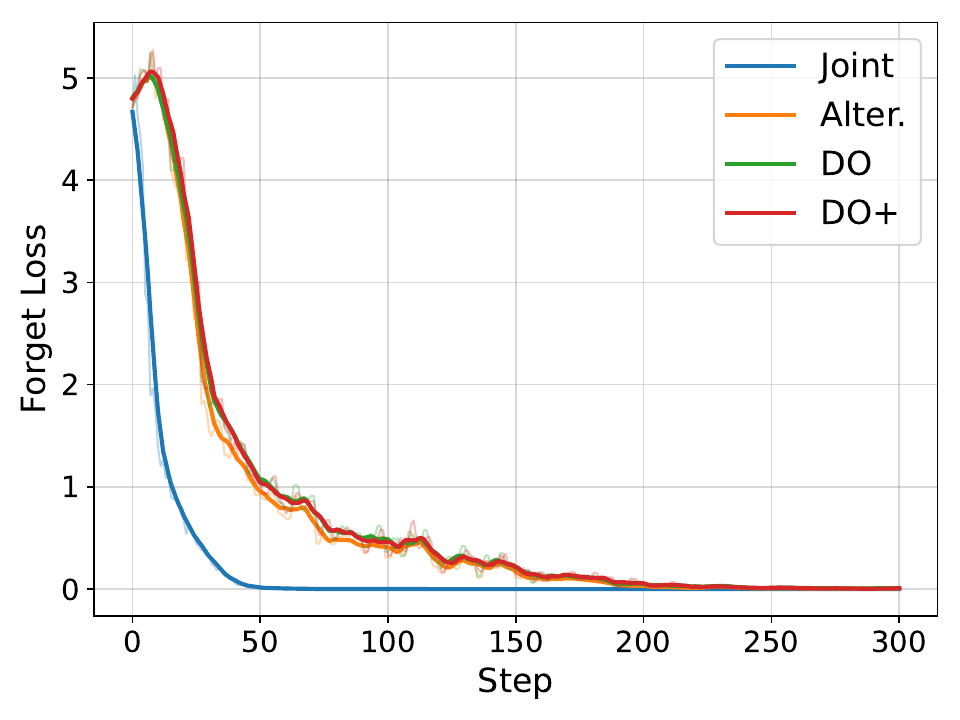}
    }
    \subfigure[Retaining Loss]{
    \includegraphics[width=0.235\textwidth]{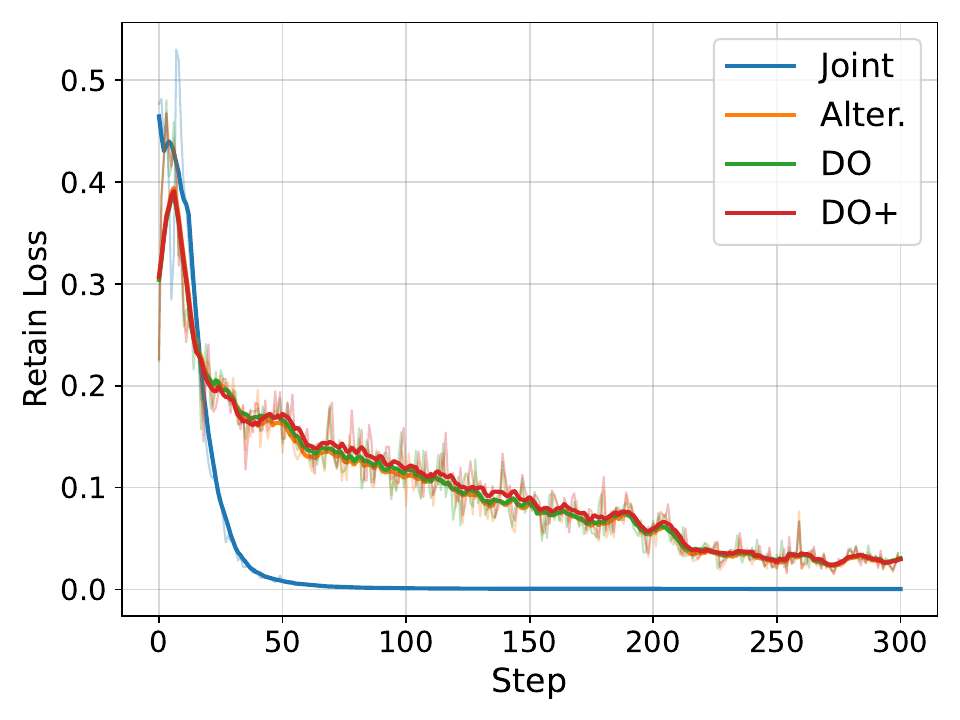}
    }
    \vspace{-0.5em}
    \caption{Comparison of unlearning metrics and losses over time steps. We plot the \textbf{(a)} forget efficacy, i.e., the average of targeted forget efficacy and untargeted forget efficacy, \textbf{(b)} model utility, \textbf{(c)} forgetting loss, and \textbf{(d)} retaining loss. Note that the results are collected when forgetting $5\%$ data of TOFU using IDK+GD loss function. The model is TOFU-finetuned Phi 1.5.} 
    \label{fig:curve}
    \vspace{-0.5em}
\end{figure}


\section{Comparison with Federated Learning Methods} \label{sec:compare_fl}
While federated learning \citep{konecny2015federated} focuses on overcoming client drift caused by non-IID data across a shared objective, machine unlearning must navigate adversarial objectives: the mandate to forget specific information while simultaneously retaining general knowledge. These loss functions are often diametrically opposed, leading to catastrophic forgetting, which is a phenomenon that is not a central concern in standard FL but is the primary bottleneck in unlearning. Our work specifically addresses this tension through a novel optimizer design.

Furthermore, we compare the proposed DO+ with some specific FL methods, i.e., SCAFFOLD \citep{pmlr-v119-karimireddy20a}, MIME \citep{karimireddy2021mime}, FedCM \citep{fedcm}, and Local Adam \citep{local_adam}. While FL involves optimizing multiple local models, unlearning tasks require balancing multiple objectives. Therefore, we have to adapt the core design of these FL methods mentioned to the unlearning context to ensure compatibility with our optimization framework. In the following, we list their implementations and highlight the differences between these adapted methods and DO+ regarding base and delta state updates.
\begin{itemize}
    \item \textbf{SCAFFOLD, MIME:} These two methods share the similar core design. In the context of unlearning tasks, their updating rule of delta state $\Delta$ will be $\Delta \leftarrow \beta\Delta + (1-\beta)(g-B)$, which is the same as DO+. They update the base state $B$ once only after a full forget-retain period (i.e., $F_f+F_r$ batches of data) by the rule $B\leftarrow B+\frac{1}{2}(\widehat{\Delta}_f + \widehat{\Delta}_r)$. 
    \item \textbf{FedCM:} The updating rule of delta state is $\Delta \leftarrow \beta\Delta + (1-\beta)g$. Same as SCAFFOLD and MIME, the base state is updated once only after a full forget-retain period by $B\leftarrow B+\frac{1}{2}(\widehat{\Delta}_f + \widehat{\Delta}_r)$.
    \item \textbf{Local Adam:} It only maintains two states for forgetting and retaining objectives, which is similar to DO. The difference is that the forget state $S_f$ and retain state $S_r$ will be merged as $S=\frac{1}{2}(S_f+S_r)$ after a full forget-retain period.
    \item \textbf{DO+:} The updating rule of delta state is $\Delta \leftarrow \beta\Delta + (1-\beta)(g-\widehat{B})$. The base state is updated after each data batch by $B \leftarrow \beta B + (1-\beta)g$.
\end{itemize}

Additionally, we compare these methods with DO+ under the setting of forgetting 5\% data of TOFU, IDK+GD loss, Phi-1.5. As shown in the Table \ref{tab:fl}, our method achieves the best performance, further underscoring the effectiveness of DO+ in LLM unlearning. To some extent, it can be attributed to the difference that DO+ updates the base state more frequently.

\begin{table}[H]
    \centering
    \caption{Unlearning performance of DO+ and different federated learning methods. The experiment setting is forgetting 5\% data of TOFU, IDK+GD loss, Phi-1.5.}
    \small
    \begin{tabular}{c|c c c c}
    \toprule[1.2pt]
       ~ & UFE & TFE & MU & AVG\\
    \midrule[1pt]
    SCAFFOLD & 67.24 & 66.89 & 50.44 & \cellcolor{gray!30}58.76\\
    FedCM & 70.35 & 65.32 & 49.34 & \cellcolor{gray!30}58.59\\
    Local Adam & 70.43 & 65.90 & 49.68 & \cellcolor{gray!30}58.93\\
    \textbf{DO+} & 67.63 & 67.60& 51.52 & \cellcolor{gray!30}\textbf{59.57} \\
    \bottomrule[1.2pt]
    \end{tabular}
    \vspace{-1em}
    \label{tab:fl}
\end{table}

%% file: Appendix/imple.tex
\section{Implementation Details}\label{sec:imple}
All experiments are conducted on NVIDIA H20 GPUs, and we utilize PyTorch FSDP to reduce memory costs. We employ DualOptim+ and baselines based on AdamW optimizer with a weight decay of $0.01$, betas of ($0.9$, $0.95$). A linear warm-up learning rate in the first epoch and a linearly decaying learning rate in the subsequent epochs are used.

\textbf{Unlearning in Fictitious Scenario.} For the experments on the TOFU dataset, we use fine-tuned Phi1.5-1.3B and Llama2-chat-7B models released by the original paper \citep{mainitofu} as the target models. For Phi 1.5, we use two NVIDIA H20 GPUs, and the effective batch size is set $40$. For Llama 2, we use eight NVIDIA H20 GPUs, and the effective batch size is set $128$. The initial learning rate is set to $1\times 10^{-5}$. The total unlearn steps $N$ is $300$. The forget frequency $F_f$ is $1$. The retain frequency $F_r$ is $5$. Following the setup in \citet{yuan2024closer}, the parameter $\alpha$ in ME+GD is set to $0.1$. The $\beta$ in DPO and NPO is set to $0.1$. The evaluation metrics are calculated using the codes in the official repository\footnote{\href{https://github.com/sail-sg/closer-look-LLM-unlearning}{https://github.com/sail-sg/closer-look-LLM-unlearning}}.

\textbf{Unlearning in Real-world Scenario.} Follow \citet{liu2024learning,yuan2024closer}, we use Llama-3-8B-Instruct as the target model. We use eight NVIDIA H20 GPUs, and the effective batch size is set $128$. The initial learning rate is set to $5\times 10^{-6}$. We use the repository\footnote{\href{https://github.com/EleutherAI/lm-evaluation-harness}{https://github.com/EleutherAI/lm-evaluation-harness}} of lm-evaluation-harness \citep{eval-harness} to evaluate downstream tasks with default configurations. Other configurations are consistent with TOFU dataset. 

\textbf{Safety Alignment.} We use Llama-3-8B-Instruct fine-tuend on Alpaca\footnote{\href{https://huggingface.co/PKU-Alignment/alpaca-8b-reproduced-llama-3}{https://huggingface.co/PKU-Alignment/alpaca-8b-reproduced-llama-3}} as the target model. We use eight NVIDIA H20 GPUs, and the effective batch size is set $128$. The initial learning rate is set to $1\times 10^{-5}$. The total epochs are $3$.
Following the settings in \citet{bianchisafety}, for \textbf{Joint} method, we calculate cross-entropy loss on batches randomly sampled from a mixture of 20,000 Alpaca instructions and 2,000 safety instructions. 
For \textbf{methods using alternating update scheme}, the forget frequency $F_f$ and retain frequency $F_r$ are set to $1$ and $10$, respectively, to simulate the ratio of safety instructions to general instructions, i.e., $1:10$.

\section{Discussion on Evaluation Metrics}\label{sec:example}
As shown in Table \ref{tab:exmaple}, the first two examples achieve high Untargeted Forget Efficacy (UFE) despite consisting of nonsensical strings that fail to provide a coherent rejection. Furthermore, coherent rejection responses (the last two examples) typically yield lower UFE scores than nonsensical outputs. These observations suggest that UFE alone is an insufficient metric for evaluating targeted unlearning. In contrast, Targeted Forget Efficacy (TFE) effectively distinguishes between appropriate rejections and nonsense. Consequently, for targeted unlearning tasks, we employ the mean of UFE and TFE as the comprehensive forget efficacy metric.

\begin{table}[h]
    \centering
    \caption{The outputs of unlearned Llama2 on the forget set of TOFU and the corresponding Untargeted Forget Efficacy (UFE) and Targeted Forget Efficacy (TFE). The first two examples are generated by the model unlearned using ME+GD loss function. The last two examples are generated using IDK+GD loss function.}\label{tab:exmaple}
    \begin{tabular}{c|cc}
    \toprule[1.2pt]
       Generated Text  & UFE & TFE \\
    \midrule[1pt]
        fullL fullL fullL fullL fullL... & 94.87 & 0.00\\
        ". born born". born born within... & 93.98 & 0.00\\
     \midrule[0.75pt]
     That's a blind spot in my knowledge. & 69.84 & 100.00\\
        I must admit, I don't know. & 69.39 & 96.90\\   
    \bottomrule[1.2pt]
    \end{tabular}
    
\end{table}

%% file: main.bib
@inproceedings{bourtoule2021machine,
  title={Machine unlearning},
  author={Bourtoule, Lucas and Chandrasekaran, Varun and Choquette-Choo, Christopher A and Jia, Hengrui and Travers, Adelin and Zhang, Baiwu and Lie, David and Papernot, Nicolas},
  booktitle={2021 IEEE symposium on security and privacy (SP)},
  pages={141--159},
  year={2021},
  organization={IEEE}
}

@inproceedings{
zhong2025dualoptim,
title={DualOptim: Enhancing Efficacy and Stability in Machine Unlearning with Dual Optimizers},
author={Xuyang Zhong and Haochen Luo and Chen Liu},
booktitle={The Thirty-ninth Annual Conference on Neural Information Processing Systems},
year={2025},
url={https://openreview.net/forum?id=77zz0JTNjn}
}

@inproceedings{fan2024salun,
  title={SalUn: Empowering Machine Unlearning via Gradient-Based Weight Saliency in Both Image Classification and Generation},
  author={Fan, Chongyu and Liu, Jiancheng and Zhang, Yihua and Wei, Dennis and Wong, Eric and Liu, Sijia},
  booktitle={International Conference on Learning Representations},
  year={2024}
}

@article{huang2025unified,
  title={Unified gradient-based machine unlearning with remain geometry enhancement},
  author={Huang, Zhehao and Cheng, Xinwen and Zheng, JingHao and Wang, Haoran and He, Zhengbao and Li, Tao and Huang, Xiaolin},
  journal={Advances in Neural Information Processing Systems},
  volume={37},
  pages={26377--26414},
  year={2024}
}

@inproceedings{yuan2024closer,
  title={A Closer Look at Machine Unlearning for Large Language Models},
  author={Yuan, Xiaojian and Pang, Tianyu and Du, Chao and Chen, Kejiang and Zhang, Weiming and Lin, Min},
  booktitle={International Conference on Learning Representations},
  year={2025}
}

@inproceedings{
zhang2024negative,
title={Negative Preference Optimization: From Catastrophic Collapse to Effective Unlearning},
author={Ruiqi Zhang and Licong Lin and Yu Bai and Song Mei},
booktitle={First Conference on Language Modeling},
year={2024},
url={https://openreview.net/forum?id=MXLBXjQkmb}
}

@misc{jordan2024muon,
  author       = {Keller Jordan and Yuchen Jin and Vlado Boza and Jiacheng You and
                  Franz Cesista and Laker Newhouse and Jeremy Bernstein},
  title        = {Muon: An optimizer for hidden layers in neural networks},
  year         = {2024},
  url          = {https://kellerjordan.github.io/posts/muon/}
}

@article{li2023textbooks,
  title={Textbooks are all you need ii: phi-1.5 technical report},
  author={Li, Yuanzhi and Bubeck, S{\'e}bastien and Eldan, Ronen and Del Giorno, Allie and Gunasekar, Suriya and Lee, Yin Tat},
  journal={arXiv preprint arXiv:2309.05463},
  year={2023}
}

@article{touvron2023llama,
  title={Llama 2: Open foundation and fine-tuned chat models},
  author={Touvron, Hugo and Martin, Louis and Stone, Kevin and Albert, Peter and Almahairi, Amjad and Babaei, Yasmine and Bashlykov, Nikolay and Batra, Soumya and Bhargava, Prajjwal and Bhosale, Shruti and others},
  journal={arXiv preprint arXiv:2307.09288},
  year={2023}
}

@article{grattafiori2024llama,
  title={The llama 3 herd of models},
  author={Grattafiori, Aaron and Dubey, Abhimanyu and Jauhri, Abhinav and Pandey, Abhinav and Kadian, Abhishek and Al-Dahle, Ahmad and Letman, Aiesha and Mathur, Akhil and Schelten, Alan and Vaughan, Alex and others},
  journal={arXiv preprint arXiv:2407.21783},
  year={2024}
}

@inproceedings{mainitofu,
  title={TOFU: A Task of Fictitious Unlearning for LLMs},
  author={Maini, Pratyush and Feng, Zhili and Schwarzschild, Avi and Lipton, Zachary Chase and Kolter, J Zico},
  booktitle={First Conference on Language Modeling},
 year={2024}
}

@inproceedings{bianchisafety,
  title={Safety-Tuned LLaMAs: Lessons From Improving the Safety of Large Language Models that Follow Instructions},
  author={Bianchi, Federico and Suzgun, Mirac and Attanasio, Giuseppe and Rottger, Paul and Jurafsky, Dan and Hashimoto, Tatsunori and Zou, James},
  booktitle={The Twelfth International Conference on Learning Representations},
year={2024}
}

@misc{alpaca,
  author = {Rohan Taori and Ishaan Gulrajani and Tianyi Zhang and Yann Dubois and Xuechen Li and Carlos Guestrin and Percy Liang and Tatsunori B. Hashimoto },
  title = {Stanford Alpaca: An Instruction-following LLaMA model},
  year = {2023},
  publisher = {GitHub},
  journal = {GitHub repository},
  howpublished = {\url{https://github.com/tatsu-lab/stanford_alpaca}},
}

@inproceedings{
rafailov2023direct,
title={Direct Preference Optimization: Your Language Model is Secretly a Reward Model},
author={Rafael Rafailov and Archit Sharma and Eric Mitchell and Christopher D Manning and Stefano Ermon and Chelsea Finn},
booktitle={Thirty-seventh Conference on Neural Information Processing Systems},
year={2023},
url={https://openreview.net/forum?id=HPuSIXJaa9}
}

@inproceedings{
hu2022lora,
title={Lo{RA}: Low-Rank Adaptation of Large Language Models},
author={Edward J Hu and yelong shen and Phillip Wallis and Zeyuan Allen-Zhu and Yuanzhi Li and Shean Wang and Lu Wang and Weizhu Chen},
booktitle={International Conference on Learning Representations},
year={2022},
url={https://openreview.net/forum?id=nZeVKeeFYf9}
}

@article{liu2024learning,
  title={Learning to refuse: Towards mitigating privacy risks in llms},
  author={Liu, Zhenhua and Zhu, Tong and Tan, Chuanyuan and Chen, Wenliang},
  journal={arXiv preprint arXiv:2407.10058},
  year={2024}
}

@inproceedings{
hendrycks2021measuring,
title={Measuring Massive Multitask Language Understanding},
author={Dan Hendrycks and Collin Burns and Steven Basart and Andy Zou and Mantas Mazeika and Dawn Song and Jacob Steinhardt},
booktitle={International Conference on Learning Representations},
year={2021},
url={https://openreview.net/forum?id=d7KBjmI3GmQ}
}

@article{Clark2018ThinkYH,
  title={Think you have Solved Question Answering? Try ARC, the AI2 Reasoning Challenge},
  author={Peter Clark and Isaac Cowhey and Oren Etzioni and Tushar Khot and Ashish Sabharwal and Carissa Schoenick and Oyvind Tafjord},
  journal={ArXiv},
  year={2018},
  volume={abs/1803.05457},
  url={https://api.semanticscholar.org/CorpusID:3922816}
}

@article{Cobbe2021TrainingVT,
  title={Training Verifiers to Solve Math Word Problems},
  author={Karl Cobbe and Vineet Kosaraju and Mo Bavarian and Mark Chen and Heewoo Jun and Lukasz Kaiser and Matthias Plappert and Jerry Tworek and Jacob Hilton and Reiichiro Nakano and Christopher Hesse and John Schulman},
  journal={ArXiv},
  year={2021},
  volume={abs/2110.14168},
  url={https://api.semanticscholar.org/CorpusID:239998651}
}

@inproceedings{joshi-etal-2017-triviaqa,
    title = "{T}rivia{QA}: A Large Scale Distantly Supervised Challenge Dataset for Reading Comprehension",
    author = "Joshi, Mandar  and
      Choi, Eunsol  and
      Weld, Daniel  and
      Zettlemoyer, Luke",
    editor = "Barzilay, Regina  and
      Kan, Min-Yen",
    booktitle = "Proceedings of the 55th Annual Meeting of the Association for Computational Linguistics (Volume 1: Long Papers)",
    month = jul,
    year = "2017",
    address = "Vancouver, Canada",
    publisher = "Association for Computational Linguistics",
    url = "https://aclanthology.org/P17-1147/",
    doi = "10.18653/v1/P17-1147",
    pages = "1601--1611",
}

@inproceedings{Lin2021TruthfulQAMH,
  title={TruthfulQA: Measuring How Models Mimic Human Falsehoods},
  author={Stephanie C. Lin and Jacob Hilton and Owain Evans},
  booktitle={Annual Meeting of the Association for Computational Linguistics},
  year={2021},
  url={https://api.semanticscholar.org/CorpusID:237532606}
}

@article{Bai2022TrainingAH,
  title={Training a Helpful and Harmless Assistant with Reinforcement Learning from Human Feedback},
  author={Yuntao Bai and Andy Jones and Kamal Ndousse and Amanda Askell and Anna Chen and Nova Dassarma and Dawn Drain and Stanislav Fort and Deep Ganguli and T. J. Henighan and Nicholas Joseph and Saurav Kadavath and John Kernion and Tom Conerly and Sheer El-Showk and Nelson Elhage and Zac Hatfield-Dodds and Danny Hernandez and Tristan Hume and Scott Johnston and Shauna Kravec and Liane Lovitt and Neel Nanda and Catherine Olsson and Dario Amodei and Tom B. Brown and Jack Clark and Sam McCandlish and Chris Olah and Benjamin Mann and Jared Kaplan},
  journal={ArXiv},
  year={2022},
  volume={abs/2204.05862},
  url={https://api.semanticscholar.org/CorpusID:248118878}
}

@inproceedings{rottger-etal-2024-xstest,
    title = "{XST}est: A Test Suite for Identifying Exaggerated Safety Behaviours in Large Language Models",
    author = {R{\"o}ttger, Paul  and
      Kirk, Hannah  and
      Vidgen, Bertie  and
      Attanasio, Giuseppe  and
      Bianchi, Federico  and
      Hovy, Dirk},
    editor = "Duh, Kevin  and
      Gomez, Helena  and
      Bethard, Steven",
    booktitle = "Proceedings of the 2024 Conference of the North American Chapter of the Association for Computational Linguistics: Human Language Technologies (Volume 1: Long Papers)",
    month = jun,
    year = "2024",
    address = "Mexico City, Mexico",
    publisher = "Association for Computational Linguistics",
    url = "https://aclanthology.org/2024.naacl-long.301/",
    doi = "10.18653/v1/2024.naacl-long.301",
    pages = "5377--5400",
}

@misc{metallamaguard2,
  author =       {Llama Team},
  title =        {Meta Llama Guard 2},
  howpublished = {\url{https://github.com/meta-llama/PurpleLlama/blob/main/Llama-Guard2/MODEL_CARD.md}},
  year =         {2024}
}

@inproceedings{fanton-etal-2021-human,
    title = "Human-in-the-Loop for Data Collection: a Multi-Target Counter Narrative Dataset to Fight Online Hate Speech",
    author = "Fanton, Margherita  and
      Bonaldi, Helena  and
      Tekiro{\u{g}}lu, Serra Sinem  and
      Guerini, Marco",
    editor = "Zong, Chengqing  and
      Xia, Fei  and
      Li, Wenjie  and
      Navigli, Roberto",
    booktitle = "Proceedings of the 59th Annual Meeting of the Association for Computational Linguistics and the 11th International Joint Conference on Natural Language Processing (Volume 1: Long Papers)",
    month = aug,
    year = "2021",
    address = "Online",
    publisher = "Association for Computational Linguistics",
    url = "https://aclanthology.org/2021.acl-long.250/",
    doi = "10.18653/v1/2021.acl-long.250",
    pages = "3226--3240",
}

@article{kurmanji2023towards,
  title={Towards unbounded machine unlearning},
  author={Kurmanji, Meghdad and Triantafillou, Peter and Hayes, Jamie and Triantafillou, Eleni},
  journal={Advances in neural information processing systems},
  volume={36},
  pages={1957--1987},
  year={2023}
}

@article{heng2023selective,
  title={Selective amnesia: A continual learning approach to forgetting in deep generative models},
  author={Heng, Alvin and Soh, Harold},
  journal={Advances in Neural Information Processing Systems},
  volume={36},
  pages={17170--17194},
  year={2023}
}

@inproceedings{dang2021right,
  title={Right to be forgotten in the age of machine learning},
  author={Dang, Quang-Vinh},
  booktitle={Advances in Digital Science: ICADS 2021},
  pages={403--411},
  year={2021},
  organization={Springer}
}

@inproceedings{
yao2024large,
title={Large Language Model Unlearning},
author={Yuanshun Yao and Xiaojun Xu and Yang Liu},
booktitle={The Thirty-eighth Annual Conference on Neural Information Processing Systems},
year={2024},
url={https://openreview.net/forum?id=8Dy42ThoNe}
}

@article{tarun2023fast,
  title={Fast yet effective machine unlearning},
  author={Tarun, Ayush K and Chundawat, Vikram S and Mandal, Murari and Kankanhalli, Mohan},
  journal={IEEE Transactions on Neural Networks and Learning Systems},
  year={2023},
  publisher={IEEE}
}

@article{jia2023model,
  title={Model sparsity can simplify machine unlearning},
  author={Jia, Jinghan and Liu, Jiancheng and Ram, Parikshit and Yao, Yuguang and Liu, Gaowen and Liu, Yang and Sharma, Pranay and Liu, Sijia},
  journal={Advances in Neural Information Processing Systems},
  volume={36},
  pages={51584--51605},
  year={2023}
}

@inproceedings{Thudi2021OnTN,
  title={On the Necessity of Auditable Algorithmic Definitions for Machine Unlearning},
  author={Anvith Thudi and Hengrui Jia and Ilia Shumailov and Nicolas Papernot},
  booktitle={USENIX Security Symposium},
  year={2021},
  url={https://api.semanticscholar.org/CorpusID:239616091}
}

@inproceedings{
fan2024simplicity,
title={Simplicity Prevails: Rethinking Negative Preference Optimization for {LLM} Unlearning},
author={Chongyu Fan and Jiancheng Liu and Licong Lin and Jinghan Jia and Ruiqi Zhang and Song Mei and Sijia Liu},
booktitle={Neurips Safe Generative AI Workshop 2024},
year={2024},
url={https://openreview.net/forum?id=pVACX02m0p}
}

@inproceedings{
zou2024improving,
title={Improving Alignment and Robustness with Circuit Breakers},
author={Andy Zou and Long Phan and Justin Wang and Derek Duenas and Maxwell Lin and Maksym Andriushchenko and J Zico Kolter and Matt Fredrikson and Dan Hendrycks},
booktitle={The Thirty-eighth Annual Conference on Neural Information Processing Systems},
year={2024},
url={https://openreview.net/forum?id=IbIB8SBKFV}
}

@inproceedings{
li2024the,
title={The {WMDP} Benchmark: Measuring and Reducing Malicious Use with Unlearning},
author={Nathaniel Li and Alexander Pan and Anjali Gopal and Summer Yue and Daniel Berrios and Alice Gatti and Justin D. Li and Ann-Kathrin Dombrowski and Shashwat Goel and Gabriel Mukobi and Nathan Helm-Burger and Rassin Lababidi and Lennart Justen and Andrew Bo Liu and Michael Chen and Isabelle Barrass and Oliver Zhang and Xiaoyuan Zhu and Rishub Tamirisa and Bhrugu Bharathi and Ariel Herbert-Voss and Cort B Breuer and Andy Zou and Mantas Mazeika and Zifan Wang and Palash Oswal and Weiran Lin and Adam Alfred Hunt and Justin Tienken-Harder and Kevin Y. Shih and Kemper Talley and John Guan and Ian Steneker and David Campbell and Brad Jokubaitis and Steven Basart and Stephen Fitz and Ponnurangam Kumaraguru and Kallol Krishna Karmakar and Uday Tupakula and Vijay Varadharajan and Yan Shoshitaishvili and Jimmy Ba and Kevin M. Esvelt and Alexandr Wang and Dan Hendrycks},
booktitle={Forty-first International Conference on Machine Learning},
year={2024},
url={https://openreview.net/forum?id=xlr6AUDuJz}
}

@article{gandikota2024elm,
  title={Erasing Conceptual Knowledge from Language Models},
  author={Rohit Gandikota and Sheridan Feucht and Samuel Marks and David Bau},
  journal={arXiv preprint arXiv:2410.02760},
  year={2024}
}

@misc{eval-harness,
  author       = {Gao, Leo and Tow, Jonathan and Abbasi, Baber and Biderman, Stella and Black, Sid and DiPofi, Anthony and Foster, Charles and Golding, Laurence and Hsu, Jeffrey and Le Noac'h, Alain and Li, Haonan and McDonell, Kyle and Muennighoff, Niklas and Ociepa, Chris and Phang, Jason and Reynolds, Laria and Schoelkopf, Hailey and Skowron, Aviya and Sutawika, Lintang and Tang, Eric and Thite, Anish and Wang, Ben and Wang, Kevin and Zou, Andy},
  title        = {The Language Model Evaluation Harness},
  month        = 07,
  year         = 2024,
  publisher    = {Zenodo},
  version      = {v0.4.3},
  doi          = {10.5281/zenodo.12608602},
  url          = {https://zenodo.org/records/12608602}
}

@inproceedings{
deng2025guard,
title={{GUARD}: Generation-time {LLM} Unlearning via Adaptive Restriction and Detection},
author={Zhijie Deng and Chris Yuhao Liu and Zirui Pang and Xinlei He and Lei Feng and Qi Xuan and Zhaowei Zhu and Jiaheng Wei},
booktitle={ICML 2025 Workshop on Machine Unlearning for Generative AI},
year={2025},
url={https://openreview.net/forum?id=HuIocuFAkY}
}

@inproceedings{

wang2026dragon,
title={{DRAGON}: Guard {LLM} Unlearning in Context via Negative Detection and Reasoning},
author={Yaxuan Wang and Chris Yuhao Liu and Quan Liu and Jinlong Pang and Wei Wei and Yujia Bao and Yang Liu},
booktitle={The Fourteenth International Conference on Learning Representations},
year={2026},
url={https://openreview.net/forum?id=vQLUAkl5SG}
}

@inproceedings{
pawelczyk2024incontext,
title={In-Context Unlearning: Language Models as Few-Shot Unlearners},
author={Martin Pawelczyk and Seth Neel and Himabindu Lakkaraju},
booktitle={Forty-first International Conference on Machine Learning},
year={2024},
url={https://openreview.net/forum?id=GKcwle8XC9}
}

@article{Jeung2025DUSKDN,
  title={DUSK: Do Not Unlearn Shared Knowledge},
  author={Wonje Jeung and Sangyeon Yoon and Hyesoo Hong and Soeun Kim and Seungju Han and Youngjae Yu and Albert No},
  journal={ArXiv},
  year={2025},
  volume={abs/2505.15209},
  url={https://api.semanticscholar.org/CorpusID:278782469}
}

@inproceedings{reisizadeh-etal-2026-blur,
    title = "{BLUR}: A Bi-Level Optimization Approach for {LLM} Unlearning",
    author = "Reisizadeh, Hadi  and
      Jia, Jinghan  and
      Bu, Zhiqi  and
      Vinzamuri, Bhanukiran  and
      Ramakrishna, Anil  and
      Chang, Kai-Wei  and
      Cevher, Volkan  and
      Liu, Sijia  and
      Hong, Mingyi",
    editor = "Demberg, Vera  and
      Inui, Kentaro  and
      Marquez, Llu{\'i}s",
    booktitle = "Proceedings of the 19th Conference of the {E}uropean Chapter of the {A}ssociation for {C}omputational {L}inguistics (Volume 1: Long Papers)",
    month = mar,
    year = "2026",
    address = "Rabat, Morocco",
    publisher = "Association for Computational Linguistics",
    url = "https://aclanthology.org/2026.eacl-long.331/",
    doi = "10.18653/v1/2026.eacl-long.331",
    pages = "7043--7058",
    ISBN = "979-8-89176-380-7",
}

@inproceedings{jeung-etal-2025-seps,
    title = "{SEPS}: A Separability Measure for Robust Unlearning in {LLM}s",
    author = "Jeung, Wonje  and
      Yoon, Sangyeon  and
      No, Albert",
    editor = "Christodoulopoulos, Christos  and
      Chakraborty, Tanmoy  and
      Rose, Carolyn  and
      Peng, Violet",
    booktitle = "Proceedings of the 2025 Conference on Empirical Methods in Natural Language Processing",
    month = nov,
    year = "2025",
    address = "Suzhou, China",
    publisher = "Association for Computational Linguistics",
    url = "https://aclanthology.org/2025.emnlp-main.283/",
    doi = "10.18653/v1/2025.emnlp-main.283",
    pages = "5556--5587",
    ISBN = "979-8-89176-332-6",
}

@misc{wang2023trace,
      title={TRACE: A Comprehensive Benchmark for Continual Learning in Large Language Models}, 
      author={Xiao Wang and Yuansen Zhang and Tianze Chen and Songyang Gao and Senjie Jin and Xianjun Yang and Zhiheng Xi and Rui Zheng and Yicheng Zou and Tao Gui and Qi Zhang and Xuanjing Huang},
      year={2023},
      eprint={2310.06762},
      archivePrefix={arXiv},
      primaryClass={cs.CL},
      url={https://arxiv.org/abs/2310.06762}, 
}

@inproceedings{
lu2021codexglue,
title={Code{XGLUE}: A Machine Learning Benchmark Dataset for Code Understanding and Generation},
author={Shuai Lu and Daya Guo and Shuo Ren and Junjie Huang and Alexey Svyatkovskiy and Ambrosio Blanco and Colin Clement and Dawn Drain and Daxin Jiang and Duyu Tang and Ge Li and Lidong Zhou and Linjun Shou and Long Zhou and Michele Tufano and MING GONG and Ming Zhou and Nan Duan and Neel Sundaresan and Shao Kun Deng and Shengyu Fu and Shujie LIU},
booktitle={Thirty-fifth Conference on Neural Information Processing Systems Datasets and Benchmarks Track (Round 1)},
year={2021},
url={https://openreview.net/forum?id=6lE4dQXaUcb}
}

@inproceedings{
lu2022learn,
title={Learn to Explain: Multimodal Reasoning via Thought Chains for Science Question Answering},
author={Pan Lu and Swaroop Mishra and Tony Xia and Liang Qiu and Kai-Wei Chang and Song-Chun Zhu and Oyvind Tafjord and Peter Clark and Ashwin Kalyan},
booktitle={Advances in Neural Information Processing Systems},
editor={Alice H. Oh and Alekh Agarwal and Danielle Belgrave and Kyunghyun Cho},
year={2022},
url={https://openreview.net/forum?id=HjwK-Tc_Bc}
}

@inproceedings{mishra-etal-2022-numglue,
    title = "{N}um{GLUE}: A Suite of Fundamental yet Challenging Mathematical Reasoning Tasks",
    author = "Mishra, Swaroop  and
      Mitra, Arindam  and
      Varshney, Neeraj  and
      Sachdeva, Bhavdeep  and
      Clark, Peter  and
      Baral, Chitta  and
      Kalyan, Ashwin",
    editor = "Muresan, Smaranda  and
      Nakov, Preslav  and
      Villavicencio, Aline",
    booktitle = "Proceedings of the 60th Annual Meeting of the Association for Computational Linguistics (Volume 1: Long Papers)",
    month = may,
    year = "2022",
    address = "Dublin, Ireland",
    publisher = "Association for Computational Linguistics",
    url = "https://aclanthology.org/2022.acl-long.246/",
    doi = "10.18653/v1/2022.acl-long.246",
    pages = "3505--3523",
}

@InProceedings{pmlr-v119-karimireddy20a,
  title = 	 {{SCAFFOLD}: Stochastic Controlled Averaging for Federated Learning},
  author =       {Karimireddy, Sai Praneeth and Kale, Satyen and Mohri, Mehryar and Reddi, Sashank and Stich, Sebastian and Suresh, Ananda Theertha},
  booktitle = 	 {Proceedings of the 37th International Conference on Machine Learning},
  pages = 	 {5132--5143},
  year = 	 {2020},
  editor = 	 {III, Hal Daumé and Singh, Aarti},
  volume = 	 {119},
  series = 	 {Proceedings of Machine Learning Research},
  month = 	 {13--18 Jul},
  publisher =    {PMLR},
  url = 	 {https://proceedings.mlr.press/v119/karimireddy20a.html},
}

@inproceedings{fedcm,
author = {Wang, Peng and Lu, Shoupeng and Yin, Hao and Yang, Banglie and Zhu, Tianli and Dai, Cheng},
title = {FedCM: client clustering and migration in federated learning via gradient path similarity and update direction deviation},
year = {2025},
isbn = {978-1-956792-06-5},
url = {https://doi.org/10.24963/ijcai.2025/706},
doi = {10.24963/ijcai.2025/706},
booktitle = {Proceedings of the Thirty-Fourth International Joint Conference on Artificial Intelligence},
articleno = {706},
numpages = {9},
location = {Montreal, Canada},
series = {IJCAI '25}
}

@misc{karimireddy2021mime,
      title={Mime: Mimicking Centralized Stochastic Algorithms in Federated Learning}, 
      author={Sai Praneeth Karimireddy and Martin Jaggi and Satyen Kale and Mehryar Mohri and Sashank J. Reddi and Sebastian U. Stich and Ananda Theertha Suresh},
      year={2021},
      eprint={2008.03606},
      archivePrefix={arXiv},
      primaryClass={cs.LG},
      url={https://arxiv.org/abs/2008.03606}, 
}

@inproceedings{local_adam,
 author = {Cheng, Ziheng and Glasgow, Margalit},
 booktitle = {International Conference on Learning Representations},
 editor = {Y. Yue and A. Garg and N. Peng and F. Sha and R. Yu},
 pages = {57694--57751},
 title = {Convergence of Distributed Adaptive Optimization with Local Updates},
 volume = {2025},
 year = {2025}
}

@misc{konecny2015federated,
      title={Federated Optimization:Distributed Optimization Beyond the Datacenter}, 
      author={Jakub Konečný and Brendan McMahan and Daniel Ramage},
      year={2015},
      eprint={1511.03575},
      archivePrefix={arXiv},
      primaryClass={cs.LG},
      url={https://arxiv.org/abs/1511.03575}, 
}
